\documentclass[lettersize,journal]{IEEEtran}

\usepackage[colorlinks,urlcolor=blue,linkcolor=blue,citecolor=blue]{hyperref}
\usepackage{amsmath,amsfonts}
\usepackage{amsthm}
\usepackage{array}
\usepackage{textcomp}
\usepackage{stfloats}
\usepackage{url}
\usepackage{verbatim}
\usepackage{graphicx}
\usepackage{cite}
\usepackage{booktabs}
\usepackage{multirow}
\usepackage{threeparttable}
\usepackage{dcolumn}
\usepackage{bm}
\usepackage{array}
\usepackage{enumerate}
\usepackage{siunitx}
\usepackage{subcaption}
\usepackage{caption}
\usepackage{float}
\usepackage{colortbl}
\usepackage{xcolor}
\usepackage{tabularx}
\usepackage{amssymb}
\usepackage[ruled,linesnumbered]{algorithm2e}
\newtheorem{theorem}{Theorem}

\begin{document}

\title{Learning Adaptive Cross-Embodiment Visuomotor Policy with Contrastive Prompt Orchestration}

\author{Yuhang Zhang, \IEEEmembership{Student Member, IEEE}, Chao Yan, \IEEEmembership{Member, IEEE}, Yu Jiaxi, Jiaping Xiao, \IEEEmembership{Member, IEEE}, and Mir Feroskhan, \IEEEmembership{Member, IEEE}
        
\thanks{Y. Zhang, J. Yu, J. Xiao, and M. Feroskhan are with the School of Mechanical and Aerospace Engineering, Nanyang Technological University, Singapore 639798, Singapore (e-mail: yuhang004@e.ntu.edu.sg; jiaping001@e.ntu.edu.sg; jiaxi002@e.ntu.edu.sg; mir.feroskhan@ntu.edu.sg). C. Yan is with the College of Automation Engineering, Nanjing University of Aeronautics and Astronautics, Nanjing, 211106, China (e-mail: yanchao@nuaa.edu.cn). \textit{(Corresponding authors: Mir Feroskhan, and Jiaping Xiao.)}
}
}

\markboth{Journal of \LaTeX\ Class Files,~Vol.~14, No.~8, August~2021}%
{Shell \MakeLowercase{\textit{et al.}}: A Sample Article Using IEEEtran.cls for IEEE Journals}


\maketitle

\begin{abstract}
Learning adaptive visuomotor policies for embodied agents remains a formidable challenge, particularly when facing cross-embodiment variations such as diverse sensor configurations and dynamic properties. Conventional learning approaches often struggle to separate task-relevant features from domain-specific variations (e.g., lighting, field-of-view, and rotation), leading to poor sample efficiency and catastrophic failure in unseen environments. To bridge this gap, we propose ContrAstive Prompt Orchestration (CAPO), a novel approach for learning visuomotor policies that integrates contrastive prompt learning and adaptive prompt orchestration. For prompt learning, we devise a hybrid contrastive learning strategy that integrates visual, temporal action, and text objectives to establish a pool of learnable prompts, where each prompt induces a visual representation encapsulating fine-grained domain factors. Based on these learned prompts, we introduce an adaptive prompt orchestration mechanism that dynamically aggregates these prompts conditioned on current observations. This enables the agent to adaptively construct optimal state representations by identifying dominant domain factors instantaneously. Consequently, the policy optimization is effectively shielded from irrelevant interference, preventing the common issue of overfitting to source domains. Extensive experiments demonstrate that CAPO significantly outperforms state-of-the-art baselines in sample efficiency and asymptotic performance. Crucially, it exhibits superior zero-shot adaptation across unseen target domains characterized by drastic environmental (e.g., illumination) and physical shifts (e.g., field-of-view and rotation), validating its effectiveness as a viable solution for cross-embodiment visuomotor policy adaptation.
\end{abstract}

\begin{IEEEkeywords}
Embodied artificial intelligence, visuomotor policy learning, cross-embodiment adaptation, contrastive learning, prompt orchestration
\end{IEEEkeywords}

\section{Introduction}

\begin{figure}[t!]
    \centering
    
    \begin{subfigure}{\linewidth}
        \centering
        \includegraphics[width=1.0\linewidth]{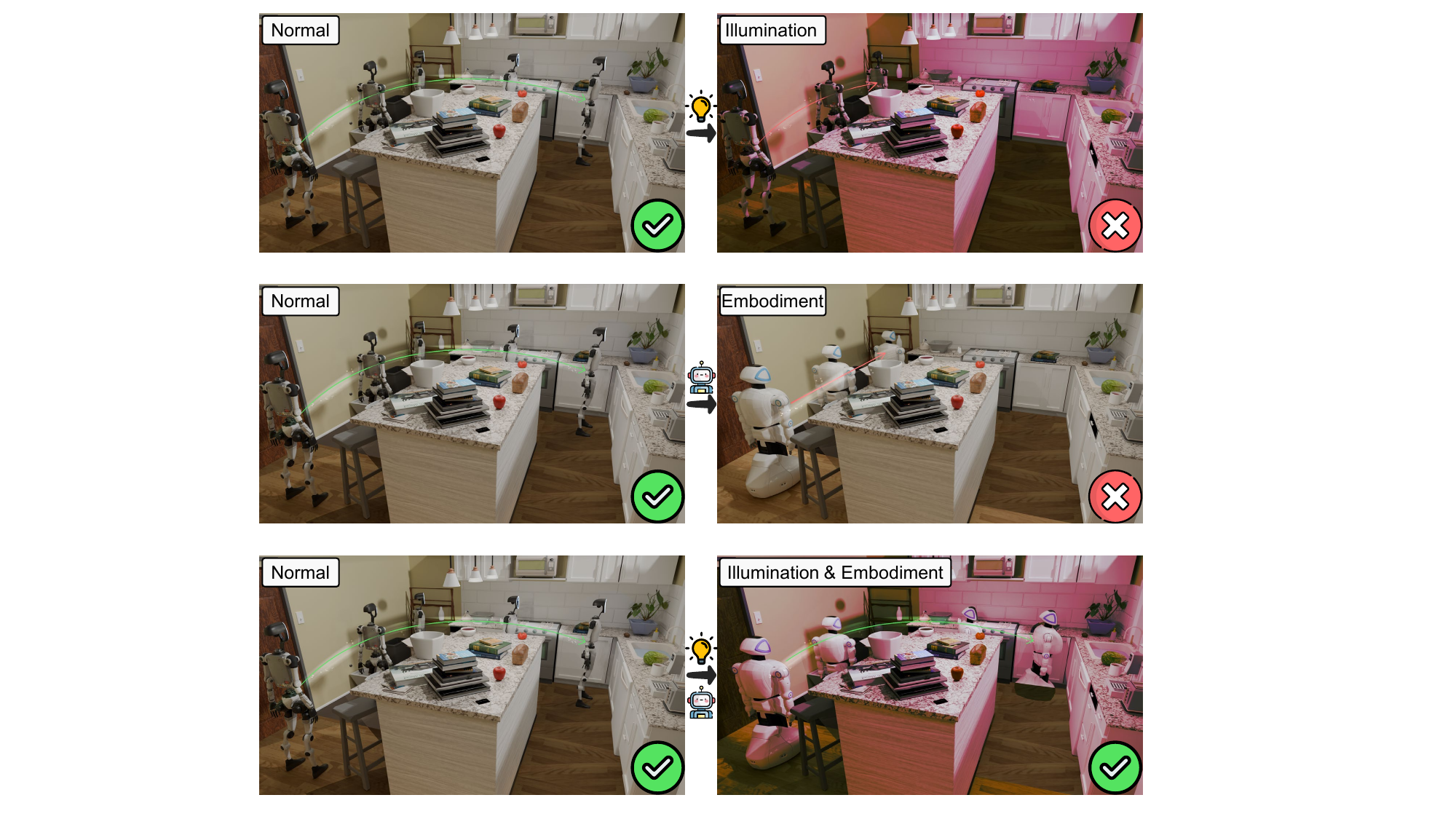}
        \caption{Illumination Sensitivity}
        \label{fig:a}
    \end{subfigure}
    
    \begin{subfigure}{\linewidth}
        \centering
        \includegraphics[width=1.0\linewidth]{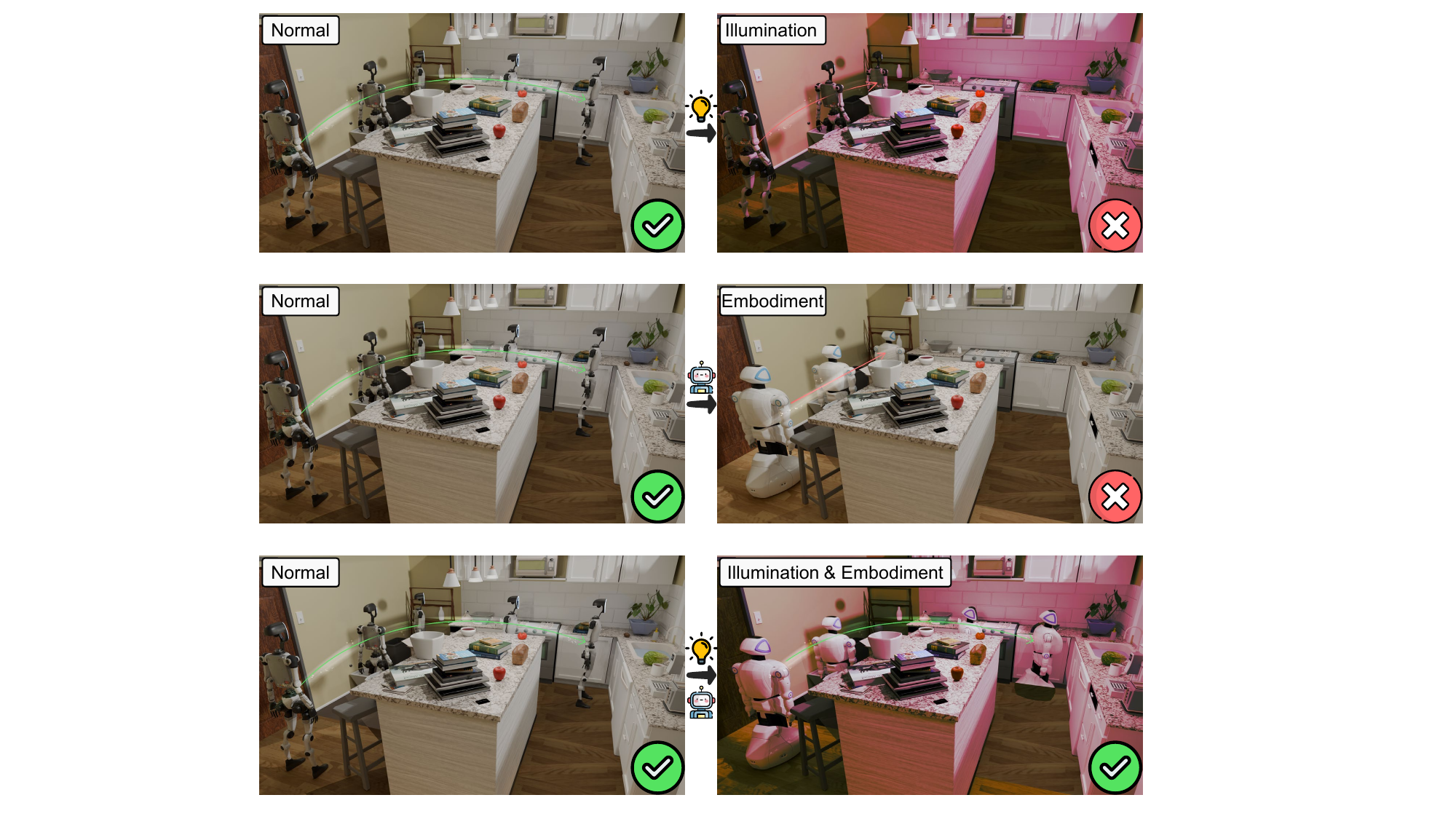}
        \caption{Embodiment Sensitivity}
        \label{fig:b}
    \end{subfigure}
    
    \begin{subfigure}{\linewidth}
        \centering
        \includegraphics[width=1.0\linewidth]{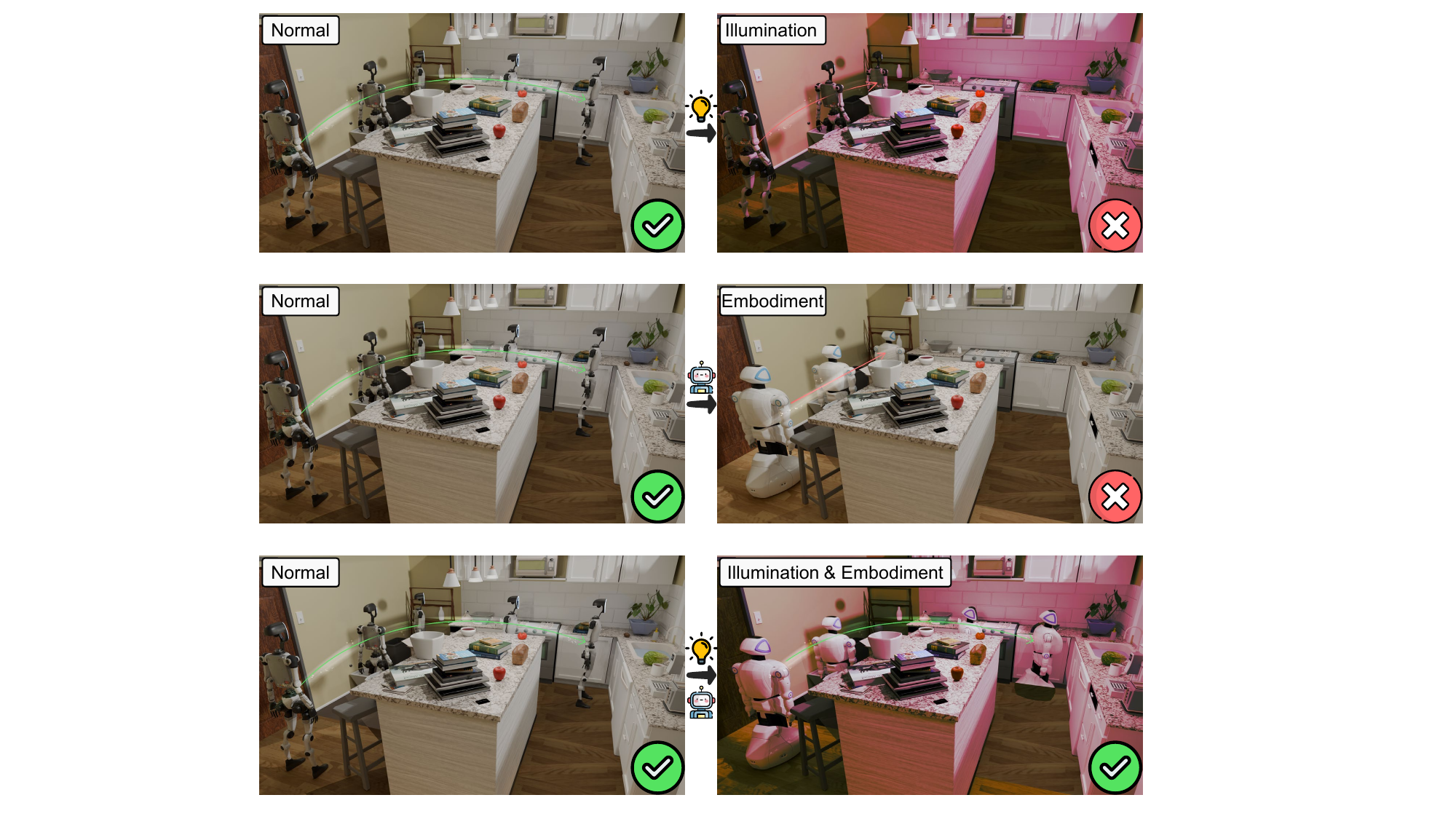}
        \caption{Joint Illumination and Embodiment Adaptation}
        \label{fig:c}
    \end{subfigure}
    
    \caption{Illustration of domain sensitivity in visuomotor policy learning and the adaptability of the proposed CAPO approach. (a) Conventional learning-based policies are highly sensitive to illumination. An agent trained in a standard environment fails to adapt when facing drastic illumination shifts, even if the underlying scene layout remains unchanged. (b) Similarly, standard approaches struggle with embodiment variations. A policy trained on a specific robot configuration fails when deployed on a different embodiment (e.g., varying FOV). (c) In contrast, CAPO synergizes representation learning and policy optimization via the adaptive orchestration of contrastive prompts. This design enables superior zero-shot adaptability, allowing the agent to successfully navigate in unseen target domains with joint illumination and embodiment variations.}

    \label{fig:head}
\end{figure}

Recent advances in embodied artificial intelligence (AI) have served as a solid bedrock for robot learning, enabling embodied agents to perform various real-time decision-making tasks in complex environments, such as navigation \cite{xiao2025vision}, manipulation \cite{kong2025pixel}, and locomotion \cite{srisuchinnawong2025interpretable}. As a promising paradigm, vision-based embodied agents have received increasing attention in the past few years, relying on onboard cameras to perceive the surrounding environment and generate action commands. However, as illustrated in Fig. \ref{fig:head}, visual observations are inherently entangled with both environmental conditions and the agent’s physical configuration, introducing significant uncertainty into perception and decision-making. In particular, agents’ sensory inputs are highly sensitive to environmental and physical factors, such as illumination, field of view (FOV), and stride length, which can substantially alter the agent’s visual observations. These factors, known as \textit{domain factors}, pose major challenges to learning adaptive visuomotor policies. This raises a fundamental question: How can we develop a policy that enables an agent to adapt to varying domain factors?

A prevalent paradigm for visuomotor policy learning relies on large-scale expert demonstrations and imitation learning (IL), where policies are trained to replicate expert behavior in tasks such as autonomous driving \cite{wu2023human} and robotic manipulation \cite{belkhale2023hydra}. However, collecting high-quality real-world demonstrations is costly, labor-intensive, and difficult to scale. Reinforcement learning (RL) offers a scalable alternative by enabling policy training through interaction in high-fidelity simulators, allowing efficient data collection and rapid iteration \cite{zare2024survey}. Many RL approaches learn policies in an end-to-end manner by directly mapping raw visual inputs to actions \cite{wu2023human, zhang2024npe, zhang2025learning1}. Despite their effectiveness, they often require millions of interactions to converge and commonly suffer from sample inefficiency. Moreover, learning directly from raw visual inputs makes them highly sensitive to domain-induced variations in pixel distributions, commonly referred to as \emph{domain gaps}. As a result, policies learned under specific visual conditions often fail to generalize to unseen environments, even when the underlying task structure remains unchanged. Some research resorts to domain randomization \cite{tobin2017domain, tobin2018domain, horvath2022object}, synthesizing diverse visual and physical variations in simulation to promote adaptation. While effective, such methods often require careful manual design of randomization distributions and tend to yield policies that trade optimality for adaptation, performing suboptimally in specific environments \cite{kumar2021rma}.

To address these challenges, prior work has increasingly turned to decoupled architectures \cite{laskin2020curl, stooke2021decoupling, zhang2025learning} for policy adaptation under domain shifts, where visual representation learning is separated from policy optimization. In particular, more recent efforts focus on visual representation learning, either by training generalized visual encoders \cite{zhao2023learning, xing2024contrastive, zhang2025oracle} or leveraging large-scale pretrained models \cite{majumdar2022zson, shah2023lm, zhang2025grounded} to extract domain-invariant features. Although these approaches improve robustness to distributional shifts, they typically produce static representations \cite{dorbala2023can}, limiting their ability to adaptively emphasize task-relevant features under unseen domain factors. This lack of adaptive flexibility often leads to inconsistent visual embeddings and subsequent policy degradation.

In this paper, we propose \textbf{C}ontr\textbf{A}stive \textbf{P}rompt \textbf{O}rchestration (\textbf{CAPO}), an adaptive cross-embodiment visuomotor policy learning apporach, to improve zero-shot adaptation under unseen domains. Our approach is motivated by combining the complementary strengths of end-to-end and decoupled paradigms, aiming to retain sample-efficient learning while overcoming the rigidity of static state representations to enable more effective adaptation under domain shifts. To achieve this, CAPO combines a contrastive prompt learning strategy with an adaptive orchestration mechanism, balancing representation robustness and policy adaptability. In the contrastive prompt learning, we employ CLIP \cite{radford2021learning} as a vision-language backbone and introduce learnable prompts to extract multiple domain-aware visual representations from the same observation, without fine-tuning the encoder. These prompt-induced representations are trained using a hybrid contrastive strategy, where visual, temporal action, and text contrastive objectives are applied across different prompts to guide their optimization. After this, to enable adaptation to dynamic domain factors, we design an adaptive orchestration mechanism based on a dual-branch attention architecture that dynamically aggregates prompted embeddings conditioned on the current observation. The aggregated representation is then fed into the policy network, where both the orchestration parameters and the policy network are jointly optimized via RL. This design allows the trained policy to adapt seamlessly from the \textit{source domain} (where the policy is trained) to the \textit{target domain} (where the policy is deployed). By synergizing the stability of pre-learned prompts with the flexibility of adaptive orchestration, our approach significantly enhances sample efficiency and adaptation performance.


The contributions of this paper are summarized below.
\begin{enumerate}
    \item We propose CAPO, a novel visuomotor policy learning approach that addresses the limitations of end-to-end and decoupled RL by retaining sample-efficient learning while overcoming static representations through integrated prompt learning and adaptive orchestration. It enables the dynamic construction of visual embeddings, allowing effective adaptation to unseen domain factors without fine-tuning.
    \item We design a hybrid contrastive learning strategy that integrates visual, temporal action, and text supervision for contrastive prompt learning. This hybrid formulation allows the encoder to extract richer and more adaptive features than standard single-objective approaches, forcing the encoder to encapsulate fine-grained domain factors.
    \item We introduce a dual-branch attention-based orchestration mechanism that dynamically re-weights prompt-induced embeddings conditioned on the current observation, enabling effective adaptation to unseen target domains and embodiment variations.
    \item We conduct extensive experiments to evaluate CAPO. The results show that it outperforms state-of-the-art baselines in terms of sample efficiency and asymptotic performance. Moreover, CAPO exhibits superior zero-shot policy adaptation when deployed in target domains, involving unseen embodiment or environmental factors.
\end{enumerate}

The remainder of this paper is organized as follows. Section \uppercase\expandafter{\romannumeral2} reviews related work on visuomotor policy learning, policy adaptation for embodied agents, and prompt learning. Section \uppercase\expandafter{\romannumeral3} presents the problem formulation and preliminaries. Section \uppercase\expandafter{\romannumeral4} describes the proposed CAPO approach in detail. Section \uppercase\expandafter{\romannumeral5} outlines the experimental setup. Section \uppercase\expandafter{\romannumeral6} reports the results in simulated environments and discusses the ablation studies. Finally, Section \uppercase\expandafter{\romannumeral7} concludes the paper and discusses directions for future research.

\section{Related Work}
This section categorizes related work into three primary areas: visuomotor policy learning, policy adaptation for embodied agents, and prompt learning.

\subsection{Visuomotor Policy Learning}
Visuomotor policy learning is a popular task for embodied agents, where the objective is to map high-dimensional visual inputs to low-dimensional, executable actions. Existing approaches can be broadly categorized into IL and RL paradigms. IL methods \cite{loquercio2018dronet, pan2020imitation, ramrakhya2022habitat, zhao2023learning1} typically rely on large-scale and high-quality datasets collected from real-world environments, where DNNs are trained to imitate expert demonstrations and approximate the pixel-to-action mapping. For instance, prior works have leveraged human trajectories in photorealistic simulators \cite{ramrakhya2022habitat}, accident datasets \cite{loquercio2018dronet}, and teleoperated robot demonstrations \cite{zhao2023learning1} to train agents that imitate human-like navigation strategies. However, collecting such datasets is time-consuming, costly, and inherently difficult to scale. Moreover, IL methods are often plagued by distribution shift \cite{ross2011reduction}, where minor deviations from the expert distribution lead to compounding errors, resulting in poor adaptation to unseen environments.

To mitigate these limitations, RL-based approaches have gained increasing attraction \cite{kaufmann2023champion, zhang2024npe, xiao2023collaborative, yan2023collision, zhang2025learning1}, which can be broadly divided into end-to-end and decoupled paradigms. By interacting with the environment and learning from trial-and-error feedback, RL enables agents to learn policies that are more adaptive to diverse and dynamic settings. End-to-end learning approaches directly map visual inputs to action commands. For instance, Kaufmann et al. \cite{kaufmann2023champion} developed a deep RL framework for autonomous drone racing that learns agile visuomotor policies directly from raw pixel input, achieving champion-level performance. Zhang et al. \cite{zhang2025learning1} further improved sample efficiency and control accuracy by integrating differentiable physics into the RL training loop. Despite their success, such approaches often suffer from prohibitive sample complexity, as learning rich visual representations solely from sparse scalar rewards is notoriously inefficient \cite{stooke2021decoupling}. Furthermore, these coupled architectures struggle to adapt when there are shifts in embodiment or sensor configurations.

To alleviate this inefficiency, a distinct line of research advocates for decoupled visuomotor learning \cite{laskin2020curl, stooke2021decoupling, zhang2025learning}. In this paradigm, visual encoders are typically pre-trained via self-supervised objectives, and subsequently frozen to provide stable state representations for downstream policy optimization. While these decoupled architectures significantly improve sample efficiency by isolating representation learning from control dynamics, they often result in rigid and static representations. Since the encoder is agnostic to the downstream task dynamics and remains fixed during deployment, it lacks the flexibility to adapt when there are drastic shifts in environment or embodiment configurations.

In contrast to these efforts, our approach bridges visual representation learning and policy optimization via an adaptive orchestration mechanism, effectively synergizing the stability of decoupled RL architectures with the flexibility of end-to-end optimization. We aim to extract adaptive visual features that are conditioned on domain factors, which are then dynamically composited for policy learning. This design facilitates better sample efficiency and enables improved adaptation across previously unseen visual domains or embodiment variations.

\subsection{Policy Adaptation for Embodied Agents}
For embodied agents, policy adaptation techniques aim to adapt the learned policy to environments with varying domain factors. Early approaches \cite{tobin2017domain, tobin2018domain, horvath2022object} extensively employ domain randomization for end-to-end learning to synthesize visual textures and physical parameters within simulation. These methods expose agents to various randomized scenarios in an attempt to promote generalizable behaviors, but they often require substantial manual tuning and tend to adopt overly conservative behaviors in the specific scenario \cite{kumar2021rma}.

Consequently, a growing body of research has shifted focus towards the decoupled paradigm of visual representation learning, which strives to extract domain-invariant features offline to be resilient to distributional shifts. Current approaches generally fall into two categories: learning generalized visual encoders \cite{zhao2023learning, xing2024contrastive, zhang2025oracle} or leveraging large-scale pretrained models \cite{majumdar2022zson, shah2023lm, zhang2025grounded}. The first category involves training custom encoders, often utilizing contrastive learning paradigms. The core mechanism involves optimizing the latent space by aligning embeddings of semantically related samples (positive pairs) while distancing unrelated ones (negative pairs), thereby enhancing the discriminative capacity of the representation. Alternatively, recent works have sought to harness the rich semantic priors of large-scale pretrained models (e.g., CLIP \cite{radford2021learning}) directly. Compared to training custom encoders from scratch, these large models offer superior zero-shot robustness and data efficiency. For instance, Zhang et al. \cite{zhang2025grounded} utilized CLIP as a goal retrieval module to align agent observations with human language instructions. This alignment enables the agent to identify task-relevant targets in complex, unstructured environments, effectively facilitating open-vocabulary navigation tasks without extensive task-specific fine-tuning.


Despite their effectiveness, both paradigms exhibit limitations in dynamic adaptation. Custom encoders and frozen pretrained backbones typically yield static representations \cite{dorbala2023can}, and the feature extraction process remains fixed regardless of the domain shifts encountered during policy learning. In contrast, our approach integrates the strengths of both approaches. While we build upon the robust foundation of large-scale pretrained models, we introduce learnable prompts optimized via contrastive learning. This design allows our system to adaptively generate representations, thereby addressing the adaptability deficiency of static encoders when facing significant domain gaps.

\subsection{Prompt Learning}
Prompt learning provides a parameter-efficient alternative to full model fine-tuning by introducing lightweight, learnable prompts that condition large pretrained models toward downstream tasks. It is initially explored in language models through optimizing a small set of textual prompt vectors \cite{zhou2022learning}. It is also extended to vision transformers, where inserting learnable prompts into transformer layers enables broad task adaptation without modifying backbone weights \cite{yang2023fine}, which is recognized as visual prompting. Recently, this concept has expanded to visual–textual prompting in multi-modal embedding spaces, demonstrating that prompts can encode complementary task-specific cues \cite{khattak2023maple, yang2024fine}. For instance, Khattak et al. \cite{khattak2023maple} proposed multi-modal prompt learning, which introduces learnable prompts into both vision and language branches to ensure mutual synergy and improve the alignment between multi-modal representations. Similarly, Yang et al. \cite{yang2024fine} investigated fine-grained visual prompting for instance-level perception, devising a blur reverse mask strategy that suppresses background noise while retaining spatial coherence to enhance zero-shot comprehension. Despite these successes in static vision tasks, the application of prompt learning in embodied agents remains underexplored.

 In this paper, we bridge this gap by adopting visual prompting techniques to explicitly address the perceptual challenges faced by embodied agents. Unlike prior work, we exploit diverse hybrid contrastive prompts to fundamentally restructure the representation space, enabling zero-shot adaptation in complex cross-embodiment visuomotor policy learning scenarios.


\section{Preliminaries and Problem Formulation}

This section outlines the preliminaries of CAPO, providing the problem formulation and the contrastive learning objective designed to facilitate visuomotor policy training.

\subsection{Problem Formulation}

We formulate the visuomotor task for embodied agents as a Partially Observable Markov Decision Process (POMDP), defined by a tuple $\mathcal{M} = (\mathcal{S}, \mathcal{O}, \mathcal{A}, \mathcal{T}, \mathcal{R}, \gamma)$. Here, $\mathcal{S}$ represents the underlying state space of the environment, which is not directly accessible to the agent. $\mathcal{O}$ denotes the high-dimensional observation space, where the agent receives an egocentric visual observation $o_t \in \mathcal{O}$ at each time step $t$, determined by the conditional observation probability $\mathcal{P}(o_t \mid s_t)$ given the hidden state $s_t \in \mathcal{S}$. The action space $\mathcal{A}$ consists of the navigation commands executable by the agent. The transition dynamics $\mathcal{T}(s_{t+1} \mid s_t, a_t)$ govern the evolution of the environment state given the current state and the executed action $a_t \in \mathcal{A}$. The objective is defined by a reward function $\mathcal{R}: \mathcal{S} \times \mathcal{A} \rightarrow \mathbb{R}$, which provides a scalar feedback $r_t$ reflecting the task progress, and $\gamma \in [0, 1)$ is the discount factor.

Unlike mainstream end-to-end approaches that train a visual encoder $f_\theta$ from scratch \cite{zhang2024npe, xiao2023collaborative, yan2023collision}, we decouple visual representation learning from policy optimization by leveraging CLIP as the image encoder, denoted as $\Phi$. To adapt this fixed backbone to the dynamic domain factors of embodied navigation, we introduce a set of $K$ learnable visual prompts, denoted as $\mathbf{P} = \{ \boldsymbol{p}^1, \boldsymbol{p}^2, \dots, \boldsymbol{p}^K \}$. Each prompt $\boldsymbol{p}^k$ is defined as a sequence of continuous learnable vectors:
\begin{equation}
    \boldsymbol{p}^k = [\boldsymbol{v}^k_1, \boldsymbol{v}^k_2, \dots, \boldsymbol{v}^k_L], \quad \boldsymbol{v}^k_i \in \mathbb{R}^{D},
\end{equation}
where $L$ represents the prompt length and $D$ is the embedding dimension. These prompts are prepended to the input observation $o_t$ and processed by the encoder $\Phi$ to extract context-aware visual features.

To construct the final state representation from these multiple prompted embeddings, we employ an attention-based prompt orchestration mechanism $\mathcal{G}_{\text{attn}}$. Let $\mathbf{z}_t^{v} = \Phi(o_t, \varnothing)$ be the vanilla embedding using the frozen CLIP encoder $\Phi$ without prompts and $\boldsymbol{z}^k_t = \Phi(o_t, \boldsymbol{p}^k)$ be the feature embedding derived from the $k$-th prompt. The final fused feature representation $\boldsymbol{z}_t^f$ is computed as a weighted combination of these embeddings:
\begin{equation}
    \boldsymbol{z}_t^f = \mathbf{z}_t^{v} + \mathcal{G}_{\text{attn}}(\{ \boldsymbol{z}^1_t, \dots, \boldsymbol{z}^K_t \}),
\end{equation}
where the attention weights are dynamically determined based on the current visual observation (the details will be introduced in the subsequent section). This fused representation $\boldsymbol{z}_t^f$ then serves as the input to the policy network $\pi_\psi$, parameterized by $\psi$. The goal of the agent is to learn an optimal policy $\pi_\psi^*$ that maximizes the expected discounted cumulative reward: $J(\pi_\psi) = \mathbb{E}_{\pi_\psi} [\sum_{t=0}^{T} \gamma^t r_t]$.

\subsection{Contrastive Learning}
Contrastive learning has emerged as a dominant paradigm in self-supervised representation learning, which is pivotal for effective policy adaptation, aimed at extracting discriminative features by maximizing the mutual information between semantically similar instances. Instead of relying on manual annotations, it treats instance discrimination as a dictionary look-up task. Formally, given a batch of sampled observations, we employ a query encoder $f_q$ and a key encoder $f_k$ to map inputs into a shared latent space, producing feature vectors $q$ and $k$. The training objective optimizes the alignment between a positive pair $(q_i, k_i)$ while contrasting it against negative samples. Consequently, the optimization is performed by minimizing the contrastive loss \cite{laskin2020curl}:
\begin{equation}
\label{infonce}
\mathcal{L}_{\text{NCE}} = - \mathbb{E} \left[ \log \frac{\mathrm{sim}(q_i, k_i)}{\sum_{j=1}^{N} \mathrm{sim}(q_i, k_j)} \right],
\end{equation}
where the similarity function is defined as $\mathrm{sim}(u, v) = \exp(u^T v)$, with $u$ and $v$ denoting $\ell_2$-normalized feature vectors. Here, we extend this formulation to the navigation task by employing a hybrid contrastive strategy (see Section \ref{sec:representation}).

\begin{figure*}[!t]
   \centering
   \includegraphics[width=0.95\textwidth]{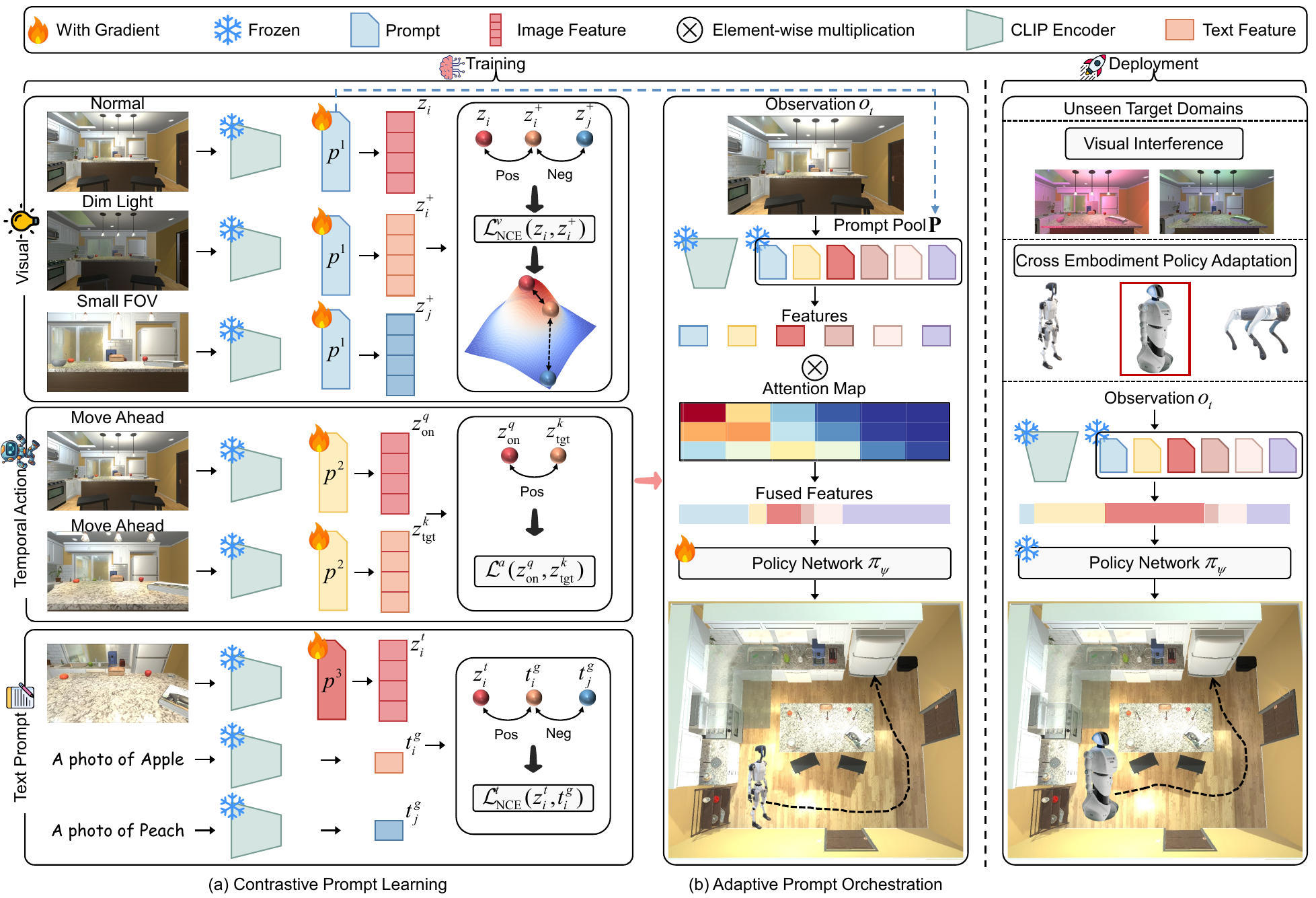} 
\caption{The framework of contrastive prompt orchestration (CAPO). (a) Contrastive Prompt Learning. We employ a hybrid contrastive learning strategy (incorporating visual, temporal action, and text objectives) to learn a pool of domain-specific visual prompts $\mathbf{P}$ for the frozen CLIP encoder $\Phi$. (b) Adaptive Prompt orchestration. We introduce an adaptive prompt orchestration mechanism that dynamically aggregates these frozen prompts based on the observation $o_t$, generating context-aware features for the policy network $\pi_{\psi}$. The trained policy achieves zero-shot cross-embodiment adaptation in unseen domains by automatically re-weighting prompts to handle visual interference and embodiment variations without fine-tuning.
}
   \label{fig:framework}
\end{figure*}

\section{Methodology}
This section details the proposed Contrastive Prompt Orchestration (CAPO) approach, including contrastive prompt learning, hybrid contrastive learning, adaptive prompt orchestration, and visuomotor policy learning.

\subsection{Overview}
We introduce CAPO, a novel visuomotor policy learning approach for embodied agents, aiming to enable zero-shot adaptation under unseen domain factors and cross-embodiment configurations. As illustrated in Fig.~\ref{fig:framework}, the overall training pipeline is organized into two tightly coordinated phases: (a) contrastive prompt learning and (b) adaptive prompt orchestration. In the prompt learning phase illustrated in Section~\ref{sec:representation}, we adopt a pretrained CLIP model \cite{radford2021learning} as the visual feature extractor and, instead of fine-tuning the entire backbone, freeze its image encoder while introducing a set of learnable visual prompts. Prompt learning is realized through a hybrid contrastive learning strategy that integrates visual, temporal action, and text contrastive learning objectives. This hybrid contrastive formulation forms a diverse prompt pool. In the prompt orchestration phase presented in Section~\ref{sec:policy learning}, we freeze the learned prompt pool and introduce an adaptive prompt orchestration mechanism coupled with policy learning. Given the current observation, this mechanism dynamically constructs a state representation by adaptively weighting prompt-conditioned features. The resulting representation is then fed into a policy network, which is trained via an RL algorithm to perform the visuomotor task. After training, the learned policy can be directly deployed for zero-shot cross-embodiment visuomotor adaptation by dynamically re-weighting prompt-induced embeddings conditioned on the observation. As a result, the policy can adjust the fused visual representation to domain-specific variations without requiring additional fine-tuning. The complete algorithm and the supporting theoretical analysis are summarized in Section~\ref{text:algorithm}.

\subsection{Contrastive Prompt Learning}
\label{sec:representation}

Here, we detail the prompt learning phase of our approach, which learns domain-specific visual prompts through hybrid contrastive learning. This phase produces a collection of visual prompts $\mathbf{P} = \{ \boldsymbol{p}^1, \boldsymbol{p}^2, \dots, \boldsymbol{p}^K \}$, each specialized for invariance to a particular domain factor. These prompts are subsequently frozen and utilized in the adaptive prompt orchestration phase. We describe this process from three aspects: data collection, visual prompt learning, and hybrid contrastive learning. Here, contrastive prompt learning focuses on optimizing prompt parameters, while hybrid contrastive learning serves as the training objective by computing contrastive losses over visual embeddings induced by different prompts.

\textbf{Data Collection.} 
We collect our training dataset $\mathcal{D}$ by executing expert navigation policies within the AI2-THOR simulation environment \cite{kolve2017ai2} under a range of domain and embodiment configurations. Each domain factor $\mathcal{F}_i$ is characterized by a tuple of environment and embodiment parameters:
\begin{equation}
\mathcal{F}_i = (\phi_i, \psi_i, \theta_i, \delta_i, \ell_i^b, \ell_i^c, \ell_i^s, \ell_i^h),
\end{equation}
where $\phi_i$ denotes the horizontal FOV parameter, $\psi_i$ refers to the rotation angle, $\theta_i$ is the look-up/down angle, $\delta_i$ specifies the translational step size, and $\ell_i^b, \ell_i^c, \ell_i^s, \ell_i^h$ represent lighting parameters controlling brightness, contrast, saturation, and hue. These factors are systematically varied to generate trajectories under diverse visual and embodiment conditions.

Specifically, for each configuration $\mathcal{F}_i$, a pretrained expert policy $\pi_e$ is deployed to navigate towards randomly sampled target objects. Each navigation episode yields a temporally ordered trajectory $\tau_i^j = \{(o_t^{i,j}, a_t^{i,j}, r_t^{i,j}, g^j)\}_{t=1}^{T_{i,j}}$, where $o_t^{i,j} \in \mathbb{R}^{C \times H \times W}$ denotes the RGB observation at timestep $t$, $a_t^{i,j} \in \mathcal{A}$ represents the executed action, $r_t^{i,j} \in \mathbb{R}$ specifies the reward value, and $g^j \in \mathcal{C}$ is the target category associated with the $j$-th trajectory.

To facilitate cross-domain contrastive prompt learning, we establish correspondence between trajectories collected under different factors that share the same navigation objective. Given a reference factor $\mathcal{F}_{\mathrm{b}}$ and an alternative factor $\mathcal{F}_i$, we identify aligned trajectory pairs $(\tau_{\mathrm{b}}^j, \tau_i^j)$ corresponding to the same target object $g^j$. Alignment is determined based on semantic similarity of the initial observations. For each trajectory, we compute an object presence vector as follows:
\begin{equation}
\mathbf{m} = (m_1, m_2, \dots, m_{|\mathcal{C}|}) \in \{0,1\}^{|\mathcal{C}|},
\end{equation}
where each element indicates the presence of a corresponding object category in the agent's initial semantic observation. Two trajectories are considered aligned if the F1-score between their object presence vectors exceeds a predefined threshold $\kappa$, ensuring that paired trajectories originate from semantically similar states under different domain factors. This alignment procedure yields a refined dataset organized as
\begin{equation}
\mathcal{D} = \bigcup_{i=1}^{K} \mathcal{D}_i, \quad \mathcal{D}_i = \{\tau_i^1, \tau_i^2, \ldots, \tau_i^{N}\},
\end{equation}
where each subset $\mathcal{D}_i$ contains trajectories collected under the same domain factor, and $N$ is the number of trajectories.

\textbf{Visual Prompt Learning.}
We leverage the rich semantic priors embedded in a pretrained CLIP model \cite{radford2021learning} to extract visual features from RGB observations. To adapt this pretrained visual encoder to domain-specific visuomotor tasks in a parameter-efficient manner, instead of updating all parameters of the pretrained model, we introduces a set of learnable prompt embeddings \cite{yang2023fine} that modulate the visual representations while keeping the CLIP image encoder frozen.

Formally, as mentioned before, we define a set of visual prompts $\mathbf{P} = \{ \boldsymbol{p}^1, \boldsymbol{p}^2, \dots, \boldsymbol{p}^K \}$, where each prompt $\boldsymbol{p}^k$ is a sequence of learnable vectors $\boldsymbol{p}^k = [\boldsymbol{v}^k_1, \boldsymbol{v}^k_2, \dots, \boldsymbol{v}^k_L]$ with $\boldsymbol{v}^k_i \in \mathbb{R}^{D}$. Given an RGB observation $o_t \in \mathbb{R}^{C \times H \times W}$ corresponding to the $k$-th domain factor, the CLIP encoder $\Phi$ first applies a patch embedding operation to map the image into a sequence of $N_p$ visual tokens:
\begin{equation}
\mathbf{x}_{\text{patch}} = \mathrm{PatchEmbed}(o_t) \in \mathbb{R}^{N_p \times d},
\label{vpl1}
\end{equation}
where $d$ denotes the hidden feature dimension. A class token $\mathbf{x}_{\text{cls}} \in \mathbb{R}^{1 \times d}$ is prepended to the patch sequence, and positional embeddings $\mathbf{E}_{\text{pos}} \in \mathbb{R}^{(N_p+1) \times d}$ are added to incorporate sequential structure:
\begin{equation}
\mathbf{X}_0 = [\mathbf{x}_{\text{cls}}; \mathbf{x}_{\text{patch}}] + \mathbf{E}_{\text{pos}} \in \mathbb{R}^{(N_p+1) \times d}.
\label{vpl2}
\end{equation}

Then, we insert the prompt embeddings $\boldsymbol{p}^k$ corresponding to the current domain factor into the token sequence after the class token. Specifically, the prompt tokens are first projected from dimension $D$ to the hidden dimension $d$ via a learnable linear transformation $\mathbf{W}_p \in \mathbb{R}^{D \times d}$:
\begin{equation}
\tilde{\boldsymbol{p}}^k = \boldsymbol{p}^k \mathbf{W}_p \in \mathbb{R}^{L \times d},
\label{vpl3}
\end{equation}
and then concatenated with the visual tokens, yielding the augmented input sequence:
\begin{equation}
\mathbf{X}_0^{\mathcal{P}} = [\mathbf{x}_{\text{cls}}; \tilde{\boldsymbol{p}}^k; \mathbf{x}_{\text{patch}}] + \mathbf{E}_{\text{pos}}^{\mathcal{P}} \in \mathbb{R}^{(1+L+N_p) \times d},
\label{vpl4}
\end{equation}
where $\mathbf{E}_{\text{pos}}^{\mathcal{P}}$ represents the positional embeddings extended to accommodate the inserted prompt length $L$.

The augmented token sequence is processed by a Transformer encoder $\xi$ consisting of $J$ layers, producing a sequence of representations: 
\begin{equation}
\mathbf{X}_j^{\mathcal{P}} = \xi_j(\mathbf{X}_{j-1}^{\mathcal{P}}), \quad j = 1, \ldots, J.
\label{vpl5}
\end{equation}

Finally, the output representation is extracted from the class token position, projected via the CLIP projection head $\mathbf{W}_{\text{proj}} \in \mathbb{R}^{d \times d_{\text{out}}}$, and $\ell_2$-normalized to obtain the domain-adapted visual embedding:
\begin{equation}
\mathbf{z}^{k}_t = \frac{\mathbf{X}_J^{\mathcal{P}}[0] \mathbf{W}_{\text{proj}}}{\| \mathbf{X}_J^{\mathcal{P}}[0] \mathbf{W}_{\text{proj}} \|_2} \in \mathbb{R}^{d_{\text{out}}}.
\label{vpl6}
\end{equation}

The embedding $\mathbf{z}^{k}_t$ serves as the input to the subsequent hybrid contrastive learning. Specifically, it is used to compute the contrastive losses, whose gradients are back-propagated to update the corresponding prompt embeddings $\boldsymbol{p}^k$. Through this optimization, the prompts progressively capture task-relevant invariances across domain factors, forming a robust set of basis representations.


\textbf{Hybrid Contrastive Learning.}
To build a diverse prompt pool $\mathbf{P}$ capable of handling diverse domain changes, we propose a hybrid contrastive learning strategy comprising three distinct objectives, namely visual contrastive learning, temporal action contrastive learning, and text contrastive learning.

\textbf{(i) Visual Contrastive Learning.}
It aims to enable the embodied agent to be robust against visual appearance changes, such as variations in brightness, contrast, saturation, and hue, which are distinct from physical embodiment changes. Specifically, we optimize the subset of prompts associated with lighting domain factors, denoted as $\{\boldsymbol{p}^{\ell_i^b}, \boldsymbol{p}^{\ell_i^c}, \boldsymbol{p}^{\ell_i^s}, \boldsymbol{p}^{\ell_i^h}\}$, to extract invariant representations despite visual perturbations. Here, $\ell_i^b, \ell_i^c, \ell_i^s, \ell_i^h$ denote continuous vectors representing the lighting configuration of the $i$-th domain factor.

For a prompt $\boldsymbol{p}^{\ell_i^b}$ corresponding to a specific lighting factor (e.g., brightness), we illustrate the construction of contrastive pairs; the same procedure applies to other lighting factors $\ell_i^c, \ell_i^s, \ell_i^h$. Given an anchor observation $o_i$ sampled from the dataset $\mathcal{D}$, we generate an augmented view $o_i^+ = \mathcal{V}_{\ell_i^b}(o_i)$ by perturbing the image according to the parameter space of $\ell_i^b$. Subsequently, we extract features for the anchor and positive samples using the frozen CLIP encoder $\Phi$ conditioned on the prompt $\boldsymbol{p}^{\ell_i^b}$ following Eqs.~\eqref{vpl1}--\eqref{vpl6}. The extracted features can be expressed as follows:
\begin{equation}
\boldsymbol{z}_i = \Phi(o_i, \boldsymbol{p}^{\ell_i^b}), \quad \boldsymbol{z}_i^+ = \Phi(o_i^+, \boldsymbol{p}^{\ell_i^b}),
\end{equation}
where $\boldsymbol{z}_i$ and $\boldsymbol{z}_i^+$ denote the visual embeddings of the original observation $o_i$ and its augmented view $o_i^+$, respectively, encoded with the same prompt $\boldsymbol{p}^{\ell_i^b}$.

To optimize the prompt, we employ a symmetric InfoNCE (Information Noise-Contrastive Estimation) objective following Eq.~\eqref{infonce}, treating the augmented view as the positive key and other samples in the batch $\mathcal{B}$ as negatives. The symmetric InfoNCE loss is defined by averaging both directions:
\begin{equation}
\mathcal{L}_{\text{NCE}}^{v}
=
\frac{1}{2}
\mathbb{E}_{i \in \mathcal{B}}
\left[
\mathcal{L}_{\text{NCE}}^{\text{uni}}(\boldsymbol{z}_i, \boldsymbol{z}_i^+)
+
\mathcal{L}_{\text{NCE}}^{\text{uni}}(\boldsymbol{z}_i^+, \boldsymbol{z}_i)
\right],
\end{equation}
where the single-direction InfoNCE loss is given by
\begin{equation}
\mathcal{L}_{\text{NCE}}^{\text{uni}}(\boldsymbol{z}_i, \boldsymbol{z}_i^+)
=
- \log
\frac{\mathrm{sim}(\boldsymbol{z}_i, \boldsymbol{z}_i^+)}
{\sum_{j=1}^{|\mathcal{B}|} \mathrm{sim}(\boldsymbol{z}_i, \boldsymbol{z}_j^+)} .
\end{equation}
Furthermore, to enforce tighter feature alignment, we incorporate a Mean Squared Error (MSE) regularization term:
\begin{equation}
\mathcal{L}_{\text{MSE}}^v = \frac{1}{|\mathcal{B}|} \sum_{i=1}^{|\mathcal{B}|} \|\boldsymbol{z}_i - \boldsymbol{z}_i^+\|_2^2.
\end{equation}
The final visual contrastive objective for prompt $\boldsymbol{p}^{\ell_i^b}$ is a weighted sum:
\begin{equation}
\mathcal{L}_{\text{visual}} = \mathcal{L}_{\text{NCE}}^v + \lambda_v \mathcal{L}_{\text{MSE}}^v,
\label{loss:visual}
\end{equation}
where $\lambda_v$ is a scaling hyperparameter that balances the alignment regularization strength.


\textbf{(ii) Temporal Action Contrastive Learning.}
This objective is designed to learn behavioral invariances across variations in agent configuration parameters, specifically the FOV, rotation angle, look-up angle, and step size. Accordingly, we optimize the subset of prompts associated with action execution factors, denoted as $\{\boldsymbol{p}^{u_i} \mid u_i \in \{\phi_i, \psi_i, \theta_i, \delta_i\}\}$, where these variables correspond to the aforementioned embodiment parameters. Unlike the visual contrastive learning approach which relies on negative samples, we adopt the BYOL self-distillation paradigm \cite{grill2020bootstrap} to avoid the difficulty of defining semantically meaningful negatives in the temporal action space.

For a specific action-conditioned prompt $\boldsymbol{p}^{u_i}$, we sample positive pairs $(o_q, o_k)$ from $\mathcal{D}$ that correspond to the same action types but are observed under different temporal contexts and embodiment configurations. To introduce view diversity, we apply stochastic image augmentations to generate augmented views $\tilde{o}_q$ and $\tilde{o}_k$. The learning framework consists of two parallel networks: an online network parametrized by $\omega$ and a target network parametrized by $\nu$. The online network comprises three components: the prompted encoder $\Phi$, a projector $\rho_{\omega}$, and a predictor $f_{\omega}$. The target network mirrors the encoder and projector architecture but excludes the predictor. The target parameters $\nu$ are updated as a momentum update rule \cite{laskin2020curl} of the online parameters $\omega$:
\begin{equation}
\nu \leftarrow \beta \nu + (1 - \beta) \omega,
\end{equation}
where $\beta \in [0, 1]$ is a decay rate.

We extract projected representations for the query and key views. The online network predicts the target representation of the cross-view sample. Specifically, let $\mathbf{z}_{\text{on}}^q = f_{\omega}(\rho_{\omega}(\Phi(\tilde{o}_q, \boldsymbol{p}^{u_i})))$ be the online prediction when the query view is input, and $\mathbf{z}_{\text{tgt}}^k = \texttt{SG}(h_{\nu}(\Phi(\tilde{o}_k, \boldsymbol{p}^{u_i})))$ be the target projection for the key view, where $\texttt{SG}(\cdot)$ denotes the stop-gradient operator. The asymmetric regression loss is defined as the MSE between the normalized prediction and target:
\begin{equation}
\mathcal{L}^a(\tilde{o}_q, \tilde{o}_k) = \left\| \frac{\mathbf{z}_{\text{on}}^q}{\|\mathbf{z}_{\text{on}}^q\|_2} - \frac{\mathbf{z}_{\text{tgt}}^k}{\|\mathbf{z}_{\text{tgt}}^k\|_2} \right\|_2^2.
\end{equation}
To enforce symmetry, we explicitly compute the loss in both directions: one where the online network processes $\tilde{o}_q$ to predict the target representation of $\tilde{o}_k$, and the other where the roles are swapped (i.e., the online network processes $\tilde{o}_k$):
\begin{equation}
\mathcal{L}_{\text{action}}
=
\frac{1}{2}
\mathbb{E}_{(\tilde{o}_q, \tilde{o}_k) \sim \mathcal{B}}
\left[
\mathcal{L}^a(\tilde{o}_q, \tilde{o}_k)
+
\mathcal{L}^a(\tilde{o}_k, \tilde{o}_q)
\right],
\label{loss:action}
\end{equation}
this symmetric objective encourages the prompt $\boldsymbol{p}^{u_i}$ to capture consistent action-relevant features invariant to temporal shifts and augmentation noise.

\textbf{(iii) Text Contrastive Learning.}
While the aforementioned visual and temporal action objectives address environmental and embodiment variations, they do not explicitly enforce the semantic alignment required for visuomotor tasks. To bridge the modality gap between visual observations and semantic goal specifications, we introduce a text contrastive learning objective. We define a distinct learnable text prompt $\boldsymbol{p}^{t}$, structurally identical to the visual prompts, which conditions the frozen CLIP encoder $\Phi$ to extract semantically grounded features.
Unlike domain-specific prompts that adapt to changing environments, $\boldsymbol{p}^{t}$ targets task-level semantic invariances orthogonal to domain shifts.

Specifically, for each navigation goal category $g \in \mathcal{C}$, we generate a text description using the template ``a photo of a [CLASS]'' and encode it via the frozen CLIP text encoder $\Psi$ to obtain the category-specific text embedding $\boldsymbol{t}_i^g = \Psi(\text{template}(g)) \in \mathbb{R}^{d_{\text{out}}}$.
These embeddings are precomputed and fixed to serve as semantic anchors.
Given a batch of observations $\mathcal{B} = \{(o_i, g_i)\}_{i=1}^{|\mathcal{B}|}$, we extract the text-prompted visual embedding $\boldsymbol{z}_i^{t} = \Phi(o_i, \boldsymbol{p}^{t})$. Furthermore, to enhance the adaptation of the learned representations, we introduce a semantic regularization mechanism. Specifically, we inject Gaussian noise into the visual embedding $\boldsymbol{z}_i^{t}$ to obtain a perturbed representation:
\begin{equation}
\tilde{\boldsymbol{z}}_i^{t} = \boldsymbol{z}_i^{t} + \boldsymbol{\zeta}, 
\quad 
\boldsymbol{\zeta} \sim \mathcal{N}(0, \sigma^2 \mathbf{I}) .
\end{equation}

We enforce that this perturbed representation maintains high semantic similarity with the corresponding text anchor $\boldsymbol{t}_{i}^g$. This strategy prevents feature collapse and ensures robustness against feature-level perturbations.
We then maximize the mutual information between the visual embedding and the corresponding text embedding following Eq.~\eqref{infonce}:
\begin{equation}
\mathcal{L}_{\text{NCE}}^{t}(\tilde{\boldsymbol{z}}_i^{t}, \boldsymbol{t}_{i}^g) = - \log \frac{\mathrm{sim}(\tilde{\boldsymbol{z}}_i^{t}, \boldsymbol{t}_{i}^g)}{\sum_{j=1}^{|\mathcal{B}|} \mathrm{sim}(\tilde{\boldsymbol{z}}_i^{t}, \boldsymbol{t}_j^g)}.
\end{equation}
Similar to visual contrastive learning, we also incorporate an MSE regularization term between matched image-text pairs:
\begin{equation}
\mathcal{L}_{\text{MSE}}^{t} = \frac{1}{|\mathcal{B}|} \sum_{i=1}^{|\mathcal{B}|} \|\tilde{\boldsymbol{z}}_i^{t} - \boldsymbol{t}_{i}^g\|_2^2.
\end{equation}
The final text objective is a weighted sum:
\begin{equation}
\mathcal{L}_{\text{text}} = \mathcal{L}_{\text{NCE}}^{t} + \lambda_{t} \mathcal{L}_{\text{MSE}}^{t},
\label{loss:text}
\end{equation}
where $\lambda_{t}$ balances the regularization strength. 
 By optimizing Eq.~\eqref{loss:text} alongside this regularization, $\boldsymbol{p}^{t}$ learns to steer the visual encoder to highlight goal-relevant semantics, providing a stable grounding signal that complements the adaptive domain-specific features.

Consequently, through the joint optimization of this hybrid contrastive learning strategy, we derive a set of domain-specific visual prompts, denoted as $\mathbf{P} = \{ \boldsymbol{p}^1, \boldsymbol{p}^2, \dots, \boldsymbol{p}^K \}$. Each prompt $\boldsymbol{p}^k$ is optimized to encapsulate distinct invariances regarding environmental or embodiment factors. This learned prompt pool is subsequently utilized in a frozen state to facilitate policy learning.

\subsection{Adaptive Prompt Orchestration}
\label{sec:policy learning}

\begin{figure*}[!t]
   \centering
   \includegraphics[width=1.0\textwidth]{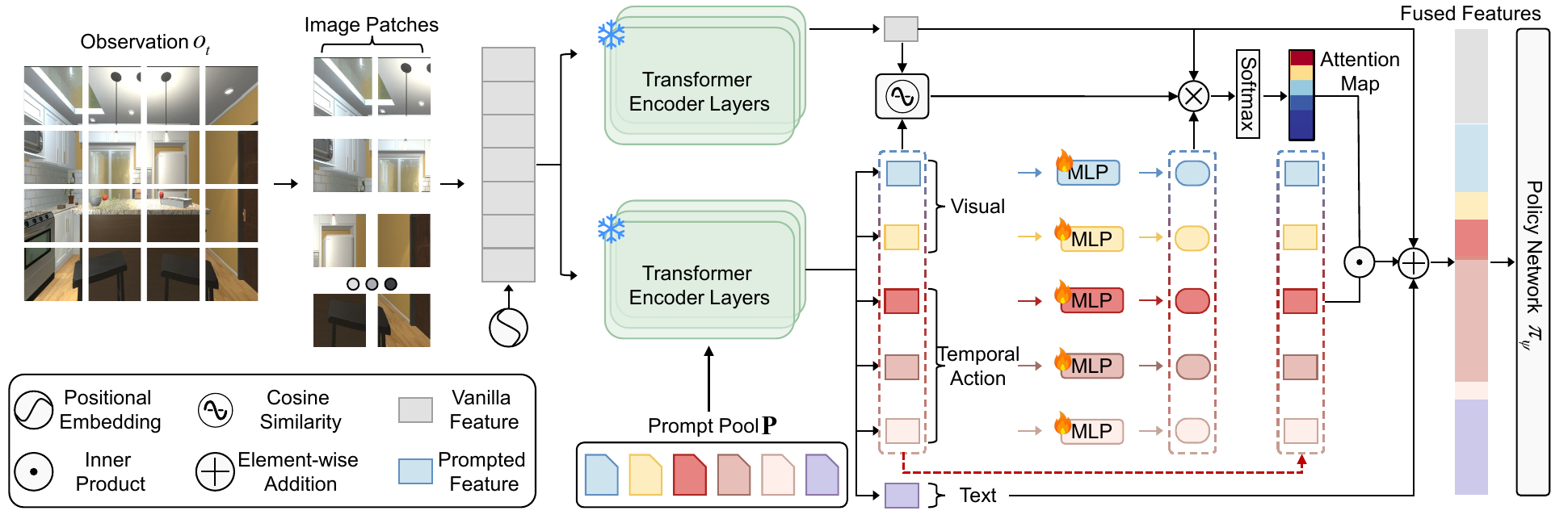} 
\caption{Architecture of the adaptive prompt orchestration mechanism. The frozen CLIP encoder extracts features from observation $o_t$ using the prompt pool $\mathbf{P}$. A dual-branch attention mechanism (combining learnable MLP projections and cosine similarity) dynamically re-weights the domain-specific features based on context. Crucially, the text-prompted feature bypasses this attention module and is fused directly. This design explicitly preserves goal-oriented semantics, which are invariant to domain factors, before feeding the policy network $\pi_{\psi}$.}
   \label{fig:network}
\end{figure*}

In this subsection, we detail the adaptive prompt orchestration phase of our approach, where the frozen domain-specific visual prompts $\mathbf{P} = \{ \boldsymbol{p}^1, \boldsymbol{p}^2, \dots, \boldsymbol{p}^K \}$ are adaptively composited and leveraged to train the visuomotor policy. We describe this process from two aspects: dual-branch attention mechanism and policy learning.

\textbf{Dual-Branch Attention Mechanism.}
While each prompt $\boldsymbol{p}^k$ excels at capturing invariances specific to a single domain factor, a robust embodied visuomotor policy requires simultaneous adaptability to multiple changing conditions. A naive aggregation of prompted features ignores the varying relevance of different domain factors for a given observation. To address this, we propose an adaptive prompt orchestration mechanism, denoted as $\mathcal{G}_{\text{attn}}$, that dynamically weights prompts based on semantic relevance, as depicted in Fig. \ref{fig:network}.

Given an RGB observation $o_t$ at timestep $t$, we first extract a vanilla embedding using the frozen CLIP encoder $\Phi$ without prompts, denoted as $\mathbf{z}_t^{v} = \Phi(o_t, \varnothing)$. Meanwhile, we obtain the set of domain-adapted embeddings $\{\mathbf{z}_t^k\}_{k=1}^K$ using the learned prompts, where $\mathbf{z}_t^k = \Phi(o_t, \boldsymbol{p}^k)$. Notably, the text-prompted embedding $\boldsymbol{z}_t^t = \Phi(o_t, \boldsymbol{p}^t)$ is excluded from the orchestration mechanism, as it encodes task-level semantic invariances. To fuse these representations, we employ a dual-guided attention mechanism driven by both learnable projections and semantic consistency. To fuse these representations, we employ a dual-guided attention mechanism driven by both learnable projections and semantic consistency.

On one hand, we compute a learnable attention score $s_k^{a}$ for each prompt by projecting the embedding into an attention key $k_k$:
\begin{equation}
    s_k^{a} = \frac{(\mathbf{z}_t^{v})^\top k_k}{\sqrt{d_{\text{out}}}}, \quad k_k = f_p(\mathbf{z}_t^k),
\end{equation}
where $f_p(\cdot)$ is a projection network consisting of two linear layers with a SiLU activation \cite{ramachandran2017searching}, and $d_{\text{out}}$ represents the output dimension of $\Phi$ with prompts.

On the other hand, to prevent the orchestration mechanism from focusing on irrelevant features, we introduce a regularization term based on the cosine similarity between the $\ell_2$-normalized vanilla and prompted embeddings to ensure semantic fidelity.
\begin{equation}
s_k^{c} = (\mathbf{z}_t^{v})^\top \mathbf{z}_t^{k} .
\end{equation}

The final attention weights $\alpha_k$ are derived via a multiplicative fusion of these scores, followed by a softmax normalization:
\begin{equation}
    \alpha_k = \frac{\exp(s_k^{a} \cdot s_k^{c})}{\sum_{j=1}^{K} \exp(s_j^{a} \cdot s_j^{c})}.
\end{equation}

The aggregated context representation $\mathbf{z}_t^f$ is computed via $\mathcal{G}_{\text{attn}}$ as a weighted sum of the prompted embeddings, enhanced by a residual connection to the vanilla feature to preserve original semantic information:
\begin{equation}
    \mathbf{z}_t^f = \mathbf{z}_t^{v} + \boldsymbol{z}_t^t + \mathcal{G}_{\text{attn} }(\{ \boldsymbol{z}^1_t, \dots, \boldsymbol{z}^K_t \}) = \mathbf{z}_t^{v} + \boldsymbol{z}_t^t + \sum_{k=1}^{K} \alpha_k \mathbf{z}_t^k.
\label{attention weights}
\end{equation}

This fused representation $\mathbf{z}_t^f$ adaptively integrates domain-specific invariances while maintaining task-relevant semantics, serving as a robust input for policy learning.

\begin{algorithm}[t]
\caption{Training Pipeline of CAPO}
\label{alg:training}
\KwIn{Trajectory dataset $\mathcal{D}$, frozen CLIP encoder $\Phi$, learnable prompt pool $\mathbf{P}$}
\KwOut{Optimized navigation policy $\pi_{\psi}$}

\BlankLine
\textbf{// Contrastive Prompt Learning} \\
\ForEach{iteration}{
    Sample minibatch $\mathcal{B} \sim \mathcal{D}$ \\
    \textbf{// Visual Contrastive Learning:} \\
    Generate augmented views $o_i, o_i^+$ for $o_i \in \mathcal{B}$ \\
    Compute $\mathcal{L}_{\text{visual}}$ via Eq.~\eqref{loss:visual} \\
    $\mathbf{P} \leftarrow \mathbf{P} - \eta_v \nabla_{\mathbf{P}} \mathcal{L}_{\text{visual}}$ \\
    \textbf{// Temporal Action Contrastive Learning:} \\
    Sample $(o_q, o_k)$ from $\mathcal{B}$ \\
    Generate augmented views $(\tilde{o}_q, \tilde{o}_k)$ \\
    Compute  $\mathbf{z}_{\text{on}}^q = f_{\omega}(\rho_{\omega}(\Phi(\tilde{o}_q, \boldsymbol{p}^{u_i})))$\\
    Compute  $\mathbf{z}_{\text{tgt}}^k = \texttt{SG}(h_{\nu}(\Phi(\tilde{o}_k, \boldsymbol{p}^{u_i})))$ \\
    Compute $\mathcal{L}_{\text{action}}$ via Eq.~\eqref{loss:action} \\
    $\mathbf{P} \leftarrow \mathbf{P} - \eta_a \nabla_{\mathbf{P}} \mathcal{L}_{\text{action}}$, $\omega \leftarrow \omega - \eta_a \nabla_{\omega} \mathcal{L}_{\text{action}}$ \\
    $\nu \leftarrow \beta \nu + (1 - \beta) \omega$ \\
    \textbf{// Text Contrastive Learning:} \\
    Sample $(o_i, g_i)$ from $\mathcal{B}$ \\
    Compute $\mathcal{L}_{\text{text}}$ via Eq.~\eqref{loss:text}\\
    $\mathbf{P} \leftarrow \mathbf{P} - \eta_t \nabla_{\mathbf{P}} \mathcal{L}_{\text{text}}$ \\
}

\BlankLine
\textbf{// Adaptive Prompt Orchestration} \\
Freeze $\mathbf{P}$ and $\Phi$, initialize $\mathcal{G}_{\text{attn}}$ and $\pi_\psi$\\

\ForEach{iteration}{
    Sample rollout trajectory $\tau = \{o_t, a_t, r_t, g\}_{t=1}^T$ \\
    Compute $\boldsymbol{z}_t^v = \Phi(o_t, \varnothing)$, $\boldsymbol{z}_t^t = \Phi(o_t, \boldsymbol{p}^t)$, $\{\boldsymbol{z}_t^k\}_{k=1}^K$ \\
    Compute $\boldsymbol{z}_t^f$ via Eq.~\eqref{attention weights} \\
    Compute $\hat{o}_t = [\boldsymbol{z}_t^f, \mathbf{g}, \mathbf{e}_{a_{t-1}}]$, $h_t = \text{GRU}(\hat{o}_t, h_{t-1})$ \\
    Output action $a_t \sim \pi_\psi(\cdot|h_t)$ \\
    Compute PPO objectives $\mathcal{L}_{\text{PPO}}$ via Eq.~\eqref{eq:ppo_actor} \\
    $\psi \leftarrow \psi - \eta_{\psi} \nabla_{\psi} \mathcal{L}_{\text{PPO}}$ \\
    }
\end{algorithm}

\textbf{Policy Learning.}
Given the fused representation $\mathbf{z}_t^f$, we construct the comprehensive observation embedding $o_t$ by incorporating task-specific semantic and history information. Specifically, the target object category $g \in \mathcal{C}$ is encoded as a one-hot vector $\mathbf{g}$, and the action taken at the previous timestep $a_{t-1} \in \mathcal{A}$ is mapped to a learnable embedding $\mathbf{e}_{a_{t-1}}$. These components are concatenated with the visual feature to form the policy input: $\hat{o}_t = [\mathbf{z}_t^f, \mathbf{g}, \mathbf{e}_{a_{t-1}}]$.

To mitigate the partial observability inherent in embodied navigation, we employ a Gated Recurrent Unit (GRU)~\cite{turkoglu2021gating} to model temporal dependencies. The GRU processes the current observation embedding $\hat{o}_t$ alongside the previous hidden state $h_{t-1}$ to generate the current hidden state $h_t$, which serves as the shared representation for the Actor-Critic architecture. The actor head predicts the action distribution $\pi_\psi(a_t|h_t)$, while the critic head estimates the state value $V(h_t)$. We optimize the policy $\pi_\psi$ using the Proximal Policy Optimization (PPO) algorithm~\cite{schulman2017proximal}, which trains the entire network architecture (including the GRU and embeddings) with the objective as:
\begin{equation}
\label{eq:ppo_actor}
\mathcal{L}_{\text{PPO}}
=
-\mathbb{E}_t \left[
\min \left(
r_t \hat{A}_t,\ 
\text{clip}(r_t, 1 - \epsilon, 1 + \epsilon)\hat{A}_t
\right)
\right],
\end{equation}
where $r_t$ denotes the probability ratio between the current policy and the previous policy, $\hat{A}_t$ is the advantage estimate at time step $t$, and $\epsilon$ is the clipping hyperparameter.

\subsection{Summary and Theoretical Justification}
\label{text:algorithm}

\textbf{Algorithm Summary.} We summarize the overall training pipeline of CAPO in Algorithm \ref{alg:training}. In the contrastive prompt learning, the agent learns a diverse set of domain-specific visual prompts through a hybrid contrastive learning strategy, capturing invariances across visual, temporal, and semantic dimensions. In the adaptive prompt orchestration, these learned prompts are frozen, and a visuomotor policy is optimized using the adaptive prompt orchestration mechanism, enabling dynamic adaptation to unseen domain factors.

\textbf{Theoretical Justification.} 
The core objective of CAPO is to construct a visuomotor policy $\pi_\psi$ that minimizes the expected risk across a distribution of domain factors $\mathcal{F}$. We posit that for any context $c \in \mathcal{F}$, there exists an optimal oracle representation $\boldsymbol{z}_c^\star$. The goal is to approximate this oracle via the adaptive prompt orchestration mechanism $\mathcal{G}_{\text{attn}}$.

Let $R_E(\pi) = \mathbb{E}_{c}[\mathcal{L}_{\text{task}}(\pi(\boldsymbol{z}^f(c)), y_c^\star)]$ be the expected risk, where $\mathcal{L}_{\text{task}}$ denotes the task-specific loss function (e.g., PPO objective) and $y_c^\star$ represents the optimal action or value for the given context $c$. Let $R^\star$ be the minimum risk achievable by the oracle. We define the \emph{excess risk} as the difference between the learned policy's performance and the oracle's performance.

\begin{theorem}\label{theorem1}
Let $\mathcal{H}_{\mathbf{P}}$ be the hypothesis space spanned by the convex hull of the learned prompt pool $\mathbf{P}$. Assuming the task loss is $L_{\ell}$-Lipschitz and the policy is $\lambda_\pi$-Lipschitz, with probability at least $1 - \delta$, the excess risk is bounded by:
\begin{equation}
\begin{split}
R_E(\hat{\pi}) - R^\star \leq &\ \tilde{O}\left(\sqrt{\frac{1}{N}}\right) \\
&\hspace{-1.4em}+ L_{\ell}\lambda_\pi \cdot \mathbb{E}_c \left[ \min_{\alpha \in \Delta_K} \left\lVert \sum_{k=1}^K \alpha_k \boldsymbol{z}^k - (\boldsymbol{z}_c^\star - \boldsymbol{z}^v) \right\rVert_2 \right].
\end{split}
\end{equation}
\end{theorem}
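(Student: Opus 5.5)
The argument follows the classical estimation/approximation split. Write $\hat{\pi}$ for the empirical risk minimizer over the hypothesis class induced by $\mathcal{H}_{\mathbf{P}}$, i.e.\ policies of the form $\pi(\boldsymbol{z}^v+\sum_k\alpha_k\boldsymbol{z}^k)$ with $\alpha\in\Delta_K$. Here $\boldsymbol{z}^t$ is absorbed into $\boldsymbol{z}^v$, or treated as a fixed offset, since it bypasses $\mathcal{G}_{\text{attn}}$. Let $\pi^\dagger$ denote the best-in-class policy, and let $\hat R_N$ be the empirical risk on $N$ samples. We decompose
\begin{equation*}
R_E(\hat\pi)-R^\star=\underbrace{\big(R_E(\hat\pi)-R_E(\pi^\dagger)\big)}_{\text{estimation}}+\underbrace{\big(R_E(\pi^\dagger)-R^\star\big)}_{\text{approximation}}.
\end{equation*}
The first term yields the $\tilde O(\sqrt{1/N})$ term and the second yields the Lipschitz projection term.

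\textbf{Estimation term.} We use the standard chain
\begin{equation*}
R_E(\hat\pi)-R_E(\pi^\dagger)\le 2\sup_{\pi}|R_E(\pi)-\hat R_N(\pi)|,
\end{equation*}
which holds because $\hat\pi$ minimizes $\hat R_N$. We then invoke a uniform convergence bound via Rademacher complexity, which holds with probability $1-\delta$ and assumes the task loss is bounded. By Talagrand's contraction lemma with constant $L_\ell\lambda_\pi$, the Rademacher complexity of the loss class reduces to that of the feature class $\{\boldsymbol{z}^v+\sum_k\alpha_k\boldsymbol{z}^k\}$. Because Rademacher complexity is unchanged under taking convex hulls, this equals the complexity of the finite set of $K$ unit-norm embeddings; the norms are controlled by the $\ell_2$ normalization in Eq.~\eqref{vpl6}. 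That complexity is $O(\sqrt{\log K/N})$, and adding the $\sqrt{\log(1/\delta)/N}$ concentration term gives $\tilde O(\sqrt{1/N})$, with $K$, $\delta$ and the Lipschitz constants hidden in $\tilde O$. If the attention parameters are also counted, we instead treat $\alpha$ as an arbitrary simplex-valued selector and accept extra log factors.

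\textbf{Approximation term.} For each context $c$, choose $\alpha^\star(c)\in\arg\min_{\alpha\in\Delta_K}\|\sum_k\alpha_k\boldsymbol{z}^k-(\boldsymbol{z}_c^\star-\boldsymbol{z}^v)\|_2$; the minimum exists by compactness of $\Delta_K$. We assume the orchestration class is rich enough to realize $c\mapsto\alpha^\star(c)$, or approximate it to arbitrary accuracy. Then $R_E(\pi^\dagger)$ is at most the risk of this choice. Comparing it with the oracle, which uses the same policy head applied to $\boldsymbol{z}_c^\star$, and applying the Lipschitz properties pointwise gives
\begin{equation*}
|\mathcal{L}_{\text{task}}(\pi(\boldsymbol{z}^f),y_c^\star)-\mathcal{L}_{\text{task}}(\pi(\boldsymbol{z}_c^\star),y_c^\star)|\le L_\ell\lambda_\pi\|\boldsymbol{z}^f-\boldsymbol{z}_c^\star\|_2.
\end{equation*}
Since $\boldsymbol{z}^f-\boldsymbol{z}_c^\star=\sum_k\alpha_k\boldsymbol{z}^k-(\boldsymbol{z}_c^\star-\boldsymbol{z}^v)$, taking $\mathbb{E}_c$ yields exactly the second term of the bound. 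Adding the estimation and approximation terms completes the argument.

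\textbf{Main obstacle.} The difficult step is the approximation step: it requires that the attention module $\mathcal{G}_{\text{attn}}$ can realize the pointwise-optimal simplex weights. It also requires that $R^\star$ be attained by the same $\lambda_\pi$-Lipschitz policy head, so that the comparison is between like objects. I would state both explicitly as realizability assumptions. A secondary issue is that the PPO objective is not a fixed supervised loss, so the concentration step implicitly treats the data as i.i.d.\ samples of $c$. I would flag this as a modeling assumption rather than attempt to handle the non-stationarity.
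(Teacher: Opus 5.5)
Your route is the paper's route. The paper also splits the excess risk into $R_E(\hat\pi)-\inf_{\pi\in\mathcal{H}_{\mathbf{P}}}R_E(\pi)$ plus an approximation error. It bounds the first by $2\mathcal{R}(\Pi)+\sqrt{\log(1/\delta)/(2N)}$ and the second by $L_\ell\lambda_\pi\,\mathbb{E}_c[\min_\alpha\|\boldsymbol{z}^f-\boldsymbol{z}_c^\star\|_2]$, then substitutes $\boldsymbol{z}^f=\boldsymbol{z}^v+\sum_k\alpha_k\boldsymbol{z}^k$. The paper leaves $\mathcal{R}(\Pi)$ abstract and simply asserts it is $\tilde O(\sqrt{1/N})$. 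The one place you go beyond it, instantiating that complexity, does not work as written.

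Your convex-hull-plus-Massart computation gives $O(\sqrt{\log K/N})$ only under two conditions. The class must be $\{c\mapsto\boldsymbol{z}^v(c)+\sum_k\alpha_k\boldsymbol{z}^k(c)\}$ with a single weight vector $\alpha\in\Delta_K$ shared by all contexts. The policy head must also be fixed, since your contraction step needs that; for vector-valued features you would additionally need Maurer's vector-contraction inequality rather than the scalar Talagrand lemma. Your approximation step, however, uses context-dependent weights $\alpha^\star(c)$, as it must, because the theorem places the minimum inside $\mathbb{E}_c$. With a shared $\alpha$, the best-in-class comparison only gives $L_\ell\lambda_\pi\min_{\alpha}\mathbb{E}_c\|\cdot\|_2$. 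That is in general strictly larger than the stated $L_\ell\lambda_\pi\,\mathbb{E}_c\min_{\alpha}\|\cdot\|_2$, so your two halves bound different hypothesis classes.

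Your fallback does not repair this. If $\alpha$ ranges over arbitrary simplex-valued functions of $c$, then on $N$ distinct samples the supremum in the Rademacher average decouples across samples. It equals the average half-range of the loss over $\Delta_K$, which is $\Theta(1)$ in general, not $\tilde O(\sqrt{1/N})$ up to extra log factors.

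To close the argument you need one attention class $\mathcal{A}$ that does both jobs at once. An example is the actual softmax map built from $f_p$ with norm-bounded weights, together with a norm-bounded policy class if $\pi_\psi$ is learned jointly, as it is under PPO. First, $\mathcal{A}$ must have bounded capacity, so that $\mathcal{R}(\Pi)=\tilde O(\sqrt{1/N})$ follows from a covering-number argument over its parameters. Second, it must be rich enough that $c\mapsto\alpha^\star(c)$ lies in its closure, so that $\inf_{\mathcal{A}}$ recovers the $\mathbb{E}_c\min_\alpha$ term by continuity and boundedness of the loss. These are exactly the assumptions the paper hides inside $\mathcal{R}(\Pi)$ and its unqualified $\min_\alpha$. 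Your ``main obstacle'' paragraph already names the realizability half; stating the capacity half as an explicit assumption next to it gives an honest version of the paper's argument. The $\sqrt{\log K/N}$ computation should either be dropped or kept only for the constant-weight class, with the correspondingly weaker approximation term $\min_\alpha\mathbb{E}_c\|\cdot\|_2$.
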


\begin{proof}
We decompose the excess risk into estimation error and approximation error, and bound them simultaneously \cite{feng2023maskcon}. Assuming the loss is $L_{\ell}$-Lipschitz and $\pi_\psi$ is $\lambda_\pi$-Lipschitz continuous \cite{lei2023generalization}, we have:
\begin{equation}
\begin{aligned}
R_E(\hat{\pi}) - R^\star &= \underbrace{(R_E(\hat{\pi}) - \inf_{\pi \in \mathcal{H}_{\mathbf{P}}} R_E(\pi))}_{\text{Estimation Error}} \\
&\quad + \underbrace{(\inf_{\pi \in \mathcal{H}_{\mathbf{P}}} R_E(\pi) - R^\star)}_{\text{Approximation Error}} \\
&\leq \underbrace{\left( 2\mathcal{R}(\Pi) + \sqrt{\frac{\log(1/\delta)}{2N}} \right)}_{\text{Bound via Uniform Convergence}} \\
&\quad + \underbrace{L_{\ell}\lambda_\pi \cdot \mathbb{E}_c \left[ \min_{\alpha} \lVert \boldsymbol{z}^f - \boldsymbol{z}_c^\star \rVert_2 \right]}_{\text{Bound via Lipschitz Continuity}}.
\end{aligned}
\end{equation}
Substituting the definition of the fused representation $\boldsymbol{z}^f = \boldsymbol{z}^v + \sum \alpha_k \boldsymbol{z}^k$ into the second term yields the stated bound. See \textbf{Supplementary Material} for the detailed derivation.
\end{proof}


\textbf{Remark.} Theorem~\ref{theorem1} decomposes the excess risk into estimation and approximation terms. The bound implies that the risk is controlled by the feature distance to the oracle (scaled by $L_{\ell}\lambda_\pi$). This justifies our design: hybrid contrastive learning ensures the prompt pool $\mathbf{P}$ is sufficiently diverse to reduce the approximation gap, while the attention mechanism $\mathcal{G}_{\text{attn}}$ dynamically optimizes $\alpha$ for precise alignment.

\section{Experimental Setup}
This section presents the experimental setup of CAPO, covering the simulation environments and domain configurations, evaluation metrics and baselines, as well as network architectures and hyperparameter settings.

\subsection{Simulation Setup}
We conduct experiments in the AI2-THOR~\cite{kolve2017ai2} environment. We specifically utilize \texttt{FloorPlan21} for both training and evaluation on the \texttt{ObjectNav} task, where the data splits are distinguished by different combinations of domain factors. Success is defined as reaching within 1.0 meters of one of 12 target categories within a maximum horizon of $T_{\max} = 500$. For contrastive prompt learning, we collect an expert dataset covering diverse domain configurations defined by $\mathcal{F} = (\phi, \psi, \theta, \delta, \ell^b, \ell^c, \ell^s, \ell^h)$. We treat lighting parameters $\ell^b, \ell^c, \ell^s, \ell^h$ (brightness, contrast, saturation, hue) as continuous variables, while rotation $\psi$, look angle $\theta$, and step size $\delta$ are discrete. The FOV $\phi$ is categorized into three groups to mitigate training instability, resulting in a total of 10 distinct domain factors. These domain factors are randomly sampled from a uniform distribution to construct the source domain, seen target domain, and unseen target domain settings for training and evaluation. The collected dataset comprises 1,468 expert trajectories with 22,754 samples, where cross-domain trajectory pairs are aligned based on a semantic similarity threshold $\kappa \ge 0.7$. The agent operates with a discrete action space consisting of move ahead, rotate left, rotate right, look up, look down, and end.

\subsection{Metrics}
To quantitatively evaluate our approach, we report results on several standard metrics widely adopted \cite{kolve2017ai2}:
\begin{itemize}
    \item \textit{Success Rate (SR) (\%)}: The percentage of evaluation episodes where the agent successfully navigates to within $1.0\,\text{m}$ of the target and executes the termination action.
    \item \textit{Success weighted by Path Length (SPL)}: A composite metric balancing task completion with trajectory efficiency, defined as
    $\text{SPL} = \frac{1}{N} \sum_{i=1}^{N} \mathbb{I}_i \cdot \frac{d_i^\star}{\max(d_i^\star, p_i)}$,
    where $\mathbb{I}_i \in \{0,1\}$ indicates whether episode $i$ is successful,
    $d_i^\star$ denotes the optimal geodesic distance to the goal, and $p_i$ is the actual path length traversed by the agent.
    \item \textit{Navigation Error (NE) $(\mathrm{m})$}: The average Euclidean distance between the agent's final position and the nearest goal object upon episode termination.
    \item \textit{Episode Length (EL)}: The average number of time steps consumed per episode.
\end{itemize}

\subsection{Baselines}
To comprehensively validate the effectiveness of CAPO, we compare against five representative baselines:

\begin{itemize}
    \item CURL \cite{laskin2020curl}: A representative visual contrastive learning approach that learns state representations by constructing positive and negative pairs through data augmentation on the same observation.

    \item ACO \cite{zhang2022learning}: An action-conditioned contrastive learning baseline that treats observations associated with the same executed action as positive pairs, while regarding others as negatives to capture action-relevant features.

    \item ATC \cite{stooke2021decoupling}: A temporal contrastive learning approach that encourages temporal consistency by treating adjacent frames as positive pairs and distant frames as negatives.

    \item ConPE \cite{choi2024efficient}: A state-of-the-art baseline that utilizes CLIP to facilitate both visual and action-conditioned contrastive learning, while employing a unimodal attention mechanism for policy adaptation.

    \item PPO \cite{schulman2017proximal}: A standard RL baseline that modifies the vanilla PPO algorithm by replacing the visual backbone with a frozen CLIP encoder for a fair comparison.
\end{itemize}

\subsection{Parameter and Architecture Configuration}

\textbf{Contrastive Prompt Learning.} 
We utilize the CLIP ViT-B/32 model as the frozen backbone $\Phi$, which takes RGB images of size $3 \times 224 \times 224$ as input. The learnable prompt pool $\mathbf{P}$ consists of $K=10$ domain-specific prompts (4 for appearance, 5 for action attributes, and 1 for text). Each prompt $\boldsymbol{p}^k$ has a length of $L=8$ with an embedding dimension of $D=768$, initialized via Xavier uniform initialization. 

For visual contrastive learning, we use a batch size of $|\mathcal{B}|=256$ and train for 500 epochs. The loss scaling hyperparameters $\lambda_v$ and $\lambda_t$ are set to 1.0. We use an SGD optimizer with momentum 0.9, weight decay $10^{-3}$, and a polynomial decay schedule with an initial learning rate of 0.001. For temporal action contrastive learning, we employ a batch size of 64 and train for 500 epochs. The online network comprises the prompted encoder $\Phi$, followed by a two-layer MLP projector $\rho_{\omega}$ ($d_{\text{out}} = 512 \to 4096 \to 16$) and a two-layer MLP predictor $f_{\omega}$ ($16 \to 512 \to 16$), both utilizing ReLU activations and batch normalization. The target network is updated via an exponential moving average of the online network with a momentum coefficient of $\beta=0.99$. The optimizer settings follow the same schedule as visual contrastive learning. For text contrastive learning, we employ a batch size of 256 and train for 500 epochs. We use an SGD optimizer with Nesterov momentum 0.9, weight decay $10^{-3}$, and a polynomial decay schedule with an initial learning rate of 0.002.

\textbf{Adaptive Prompt Orchestration.}
Here, the prompt pool $\mathbf{P}$ and encoder $\Phi$ are frozen. 
The attention mechanism $\mathcal{G}_{\text{attn}}$ utilizes a projection network $f_p$ to compute attention keys, which is implemented as a two-layer MLP ($512 \to 128 \to 512$) with SiLU activation. The policy network $\pi_\psi$ receives a concatenated input of the fused visual feature $\mathbf{z}_t^f \in \mathbb{R}^{512}$ and the one-hot goal vector $\mathbf{g} \in \mathbb{R}^{12}$. The temporal dependencies are modeled by a single-layer GRU with a hidden state size of 512. The actor and critic heads are linear layers mapping the GRU output to action logits and value estimates, respectively.

We optimize the policy using PPO with 48 parallel environments. The training spans $3 \times 10^6$ environment steps. In each update cycle, we collect a rollout buffer of 4,800 transitions (rollout length of 100 per environment). 
Key hyperparameters are as follows: the learning rate is set to $3 \times 10^{-4}$ and linearly decayed to $10^{-5}$; the PPO clipping range is $\epsilon=0.2$, and the discount factor is $\gamma=0.99$. We set the GAE parameter to 0.95 and the value loss coefficient to 0.5. Regarding the reward function design, we employ a composite reward function. Specifically, the agent receives +10.0 for successfully reaching the target object and -0.01 per time step to promote efficiency. Additionally, distance-based reward shaping is applied: the agent receives $r = d_{t-1} - d_t$ at each step, where $d_t$ denotes the geodesic distance to the target, with the magnitude clamped to the actual distance traveled to prevent reward exploitation. All training and evaluation experiments are conducted on a single NVIDIA GeForce RTX 4090 GPU.

\section{Results and Analysis}
This section presents the experimental results and key analyses validating the effectiveness of CAPO. We evaluate its performance and adaptability across diverse domain factors and embodiment configurations in simulation, benchmarking it against various baselines. Furthermore, we conduct ablation studies to examine the contribution of each core component.

\subsection{Comparison with Baselines}

The training process is conducted on the 4 source domains, each characterized by a unique combination of domain factors. For evaluation, we utilize a comprehensive set comprising the 4 source domains, 30 seen target domains, and 10 unseen target domains to rigorously assess adaptation. The primary distinction lies in whether the domain configurations were encountered during the representation learning phase.

The reward curves of the training process are illustrated in Fig. \ref{fig:reward}. As illustrated, our approach (CAPO) achieves the fastest convergence and the strongest asymptotic performance, consistently outperforming all baselines. This significant advantage stems from three pivotal design choices: (i) a synergized training paradigm, where the prompt basis is frozen while the orchestration weights and the policy are jointly optimized, effectively combining the stability of decoupled representations with the flexibility of end-to-end learning. This hybrid paradigm bypasses the sample inefficiency typically associated with end-to-end methods, while simultaneously overcoming the rigidity of static representations induced by fully decoupled pipelines; (ii) text contrastive learning, unlike baselines relying solely on visual features, CAPO leverages the rich semantic priors embedded in the CLIP model. By incorporating learnable text prompts, we explicitly align visual representations with goal-oriented semantics. This alignment not only enhances sample efficiency but also provides a more robust feature space, allowing the policy to converge to a higher optimal performance; (iii) adaptive prompt orchestration, instead of utilizing static feature representations, we introduce a dynamic attention mechanism that adaptively aggregates domain-specific prompts based on the current context. This enables the policy to rapidly identify and upweight the most relevant features, effectively accelerating the learning process and achieving better performance.

\begin{figure}[t!]
    \centering
    \includegraphics[width=1.0\linewidth]{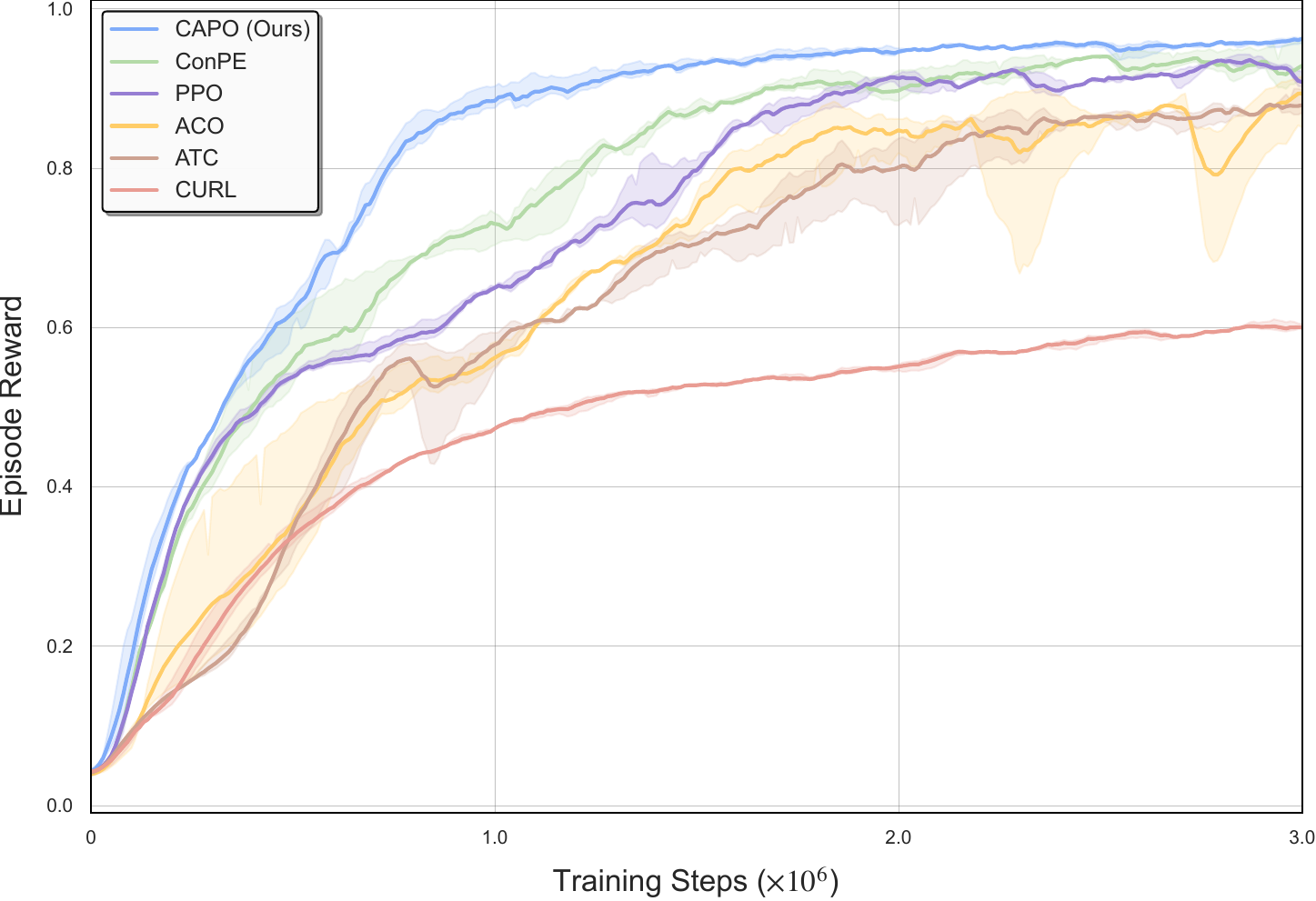}
    \caption{Training curves of episodic reward for all baselines. All results are averaged over three random seeds, with shaded regions indicating confidence intervals. CAPO demonstrates superior sample efficiency and asymptotic performance, converging faster and achieving higher rewards than all baselines.}
    \label{fig:reward}
\end{figure}

Regarding the baselines, ConPE achieves the second-best performance. However, it falls short of CAPO due to its reliance on unimodal representation learning and the lack of explicit cross-modal semantic grounding, which limits its ability to fully capture goal-relevant cues. Notably, PPO, which simply utilizes a frozen CLIP encoder, outperforms some contrastive learning baselines. We attribute this to the robust adaptation capability of the pre-trained CLIP encoder, which provides superior initial features compared to models trained from scratch. Conversely, while ATC and ACO attempt to capture temporal or action-relevant features, they converge more slowly and underperform PPO. This counter-intuitive result indicates that their coarse-grained modeling and the absence of semantic guidance fail to capture the fine-grained environmental dynamics essential for precise navigation. CURL exhibits the weakest performance, which suggests that performing instance-level contrastive learning on limited interaction data leads to catastrophic forgetting of pre-trained general knowledge and suffers significantly from distribution shift during fine-tuning.

\begin{table*}[t!]
\centering
\newcolumntype{Y}{>{\centering\arraybackslash}X}

\setlength{\tabcolsep}{1pt} 
\renewcommand{\arraystretch}{1.2}

\begin{threeparttable}
\caption{Performance of Baselines in Simulation Environments}
\label{tab:baseline_eval}

\footnotesize 

\begin{tabularx}{\textwidth}{l *{12}{Y}}
\toprule
\multirow{2}{*}{Approach} &
\multicolumn{4}{c}{Source Domains} &
\multicolumn{4}{c}{Seen Target Domains} &
\multicolumn{4}{c}{Unseen Target Domains} \\
\cmidrule(lr){2-5} \cmidrule(lr){6-9} \cmidrule(lr){10-13}
& SR$\uparrow$ & SPL$\uparrow$ & NE$\downarrow$ & EL$\downarrow$
& SR$\uparrow$ & SPL$\uparrow$ & NE$\downarrow$ & EL$\downarrow$
& SR$\uparrow$ & SPL$\uparrow$ & NE$\downarrow$ & EL$\downarrow$ \\
\midrule

CURL~\cite{laskin2020curl}
& 52.0$\pm$1.9 & 0.32$\pm$0.07 & 0.48$\pm$0.08 & 32$\pm$6
& 15.5$\pm$7.0 & 0.12$\pm$0.07 & 0.93$\pm$0.15 & 55$\pm$13
& 10.3$\pm$4.5 & 0.07$\pm$0.03 & 1.06$\pm$0.05 & 70$\pm$11 \\

ACO~\cite{zhang2022learning}
& 79.1$\pm$22.8 & 0.51$\pm$0.16 & 0.15$\pm$0.05 & 40$\pm$21
& 59.5$\pm$17.1 & 0.37$\pm$0.12 & 0.17$\pm$0.07 & 49$\pm$11
& 42.6$\pm$28.1 & 0.27$\pm$0.20 & 0.17$\pm$0.11 & 53$\pm$9 \\

ATC~\cite{stooke2021decoupling}
& 82.1$\pm$8.1 & 0.53$\pm$0.10 & 0.12$\pm$0.04 & 27$\pm$6
& 70.4$\pm$8.8 & 0.44$\pm$0.08 & 0.15$\pm$0.09 & 41$\pm$16
& 51.1$\pm$11.2 & 0.31$\pm$0.09 & 0.15$\pm$0.08 & 46$\pm$11 \\

ConPE~\cite{choi2024efficient}
& 96.1$\pm$2.3 & 0.64$\pm$0.06 & 0.03$\pm$0.01 & 20$\pm$4
& 83.3$\pm$3.9  & 0.53$\pm$0.09 & 0.06$\pm$0.08 & 33$\pm$6
& 80.5$\pm$6.2  & 0.52$\pm$0.09 & 0.07$\pm$0.07 & 36$\pm$4 \\

PPO~\cite{schulman2017proximal}
& 86.5$\pm$5.9 & 0.53$\pm$0.10 & 0.17$\pm$0.08 & 25$\pm$7
& 77.7$\pm$3.6 & 0.48$\pm$0.09 & 0.15$\pm$0.07 & 38$\pm$16
& 59.8$\pm$8.1 & 0.35$\pm$0.10 & 0.19$\pm$0.06 & 43$\pm$8 \\

\rowcolor{gray!15}
\textbf{CAPO (Ours)}
& \textbf{97.9$\pm$1.2} & \textbf{0.66$\pm$0.04} & \textbf{0.02$\pm$0.01} & \textbf{18$\pm$3}
& \textbf{90.9$\pm$3.4} & \textbf{0.61$\pm$0.05} & \textbf{0.04$\pm$0.06} & \textbf{29$\pm$5}
& \textbf{86.4$\pm$5.7} & \textbf{0.54$\pm$0.06} & \textbf{0.06$\pm$0.07} & \textbf{32$\pm$3} \\

\bottomrule
\end{tabularx}
\end{threeparttable}
\end{table*}

\begin{figure*}[!t]
   \centering
   \includegraphics[width=1.0\textwidth]{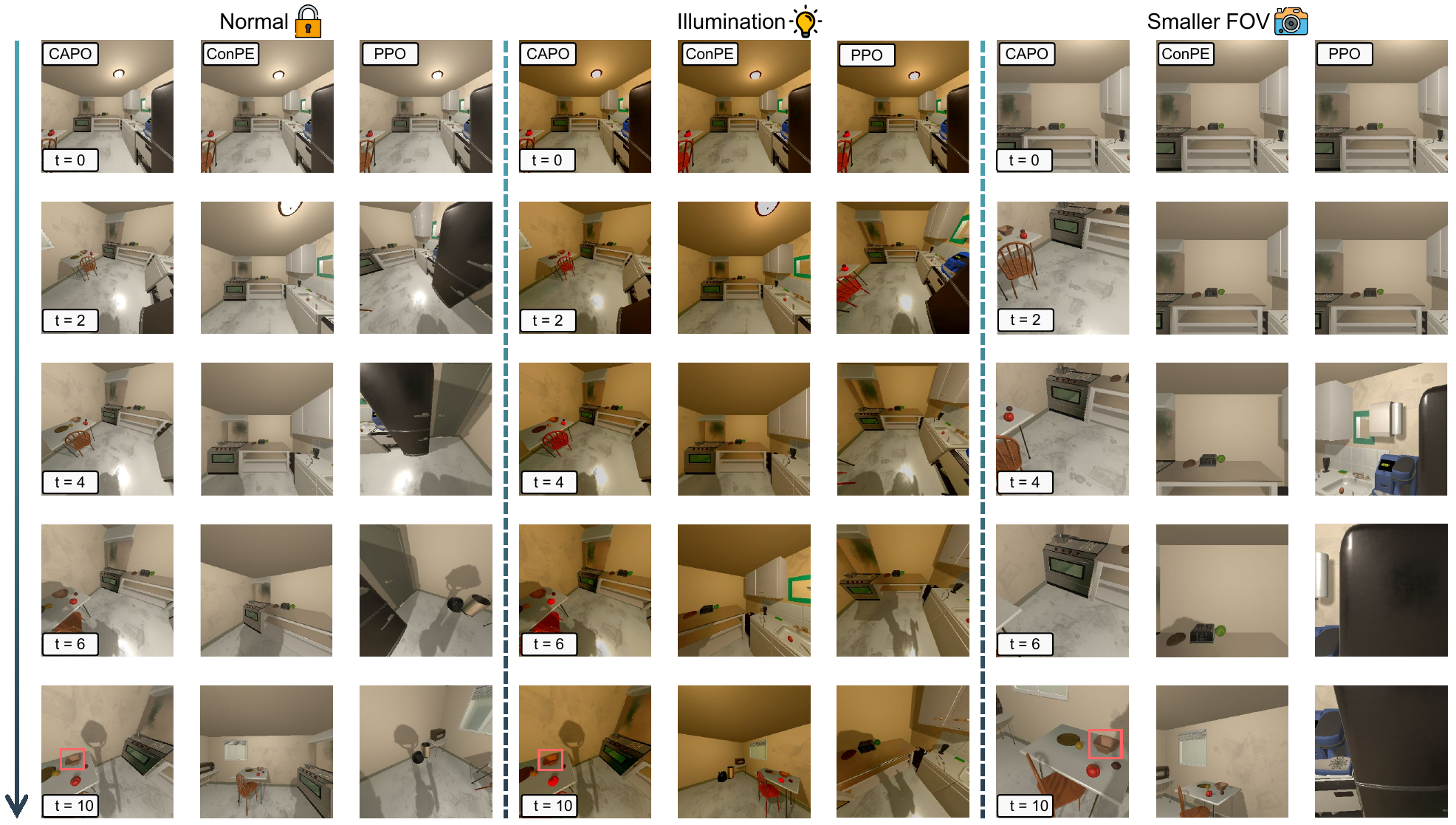} 
\caption{Visualization of navigation episodes for the goal ``Bowl'' under diverse domain shifts. The columns display egocentric frame sequences from CAPO, ConPE, and PPO across three scenarios: Normal, Illumination Change, and Cross-Embodiment (Smaller FOV). CAPO rapidly identifies and approaches the target in all conditions, demonstrating robust zero-shot adaptation. ConPE exhibits partial adaptability but suffers from delayed target lock-on, requiring more steps to complete the task. In contrast, PPO fails to adapt to environmental interference, resulting in disorientation and task failure.}
   \label{fig:tra}
\end{figure*}

\begin{figure}[htbp]
    \centering
    
    \begin{subfigure}{\linewidth}
        \centering
        \includegraphics[width=0.8\linewidth]{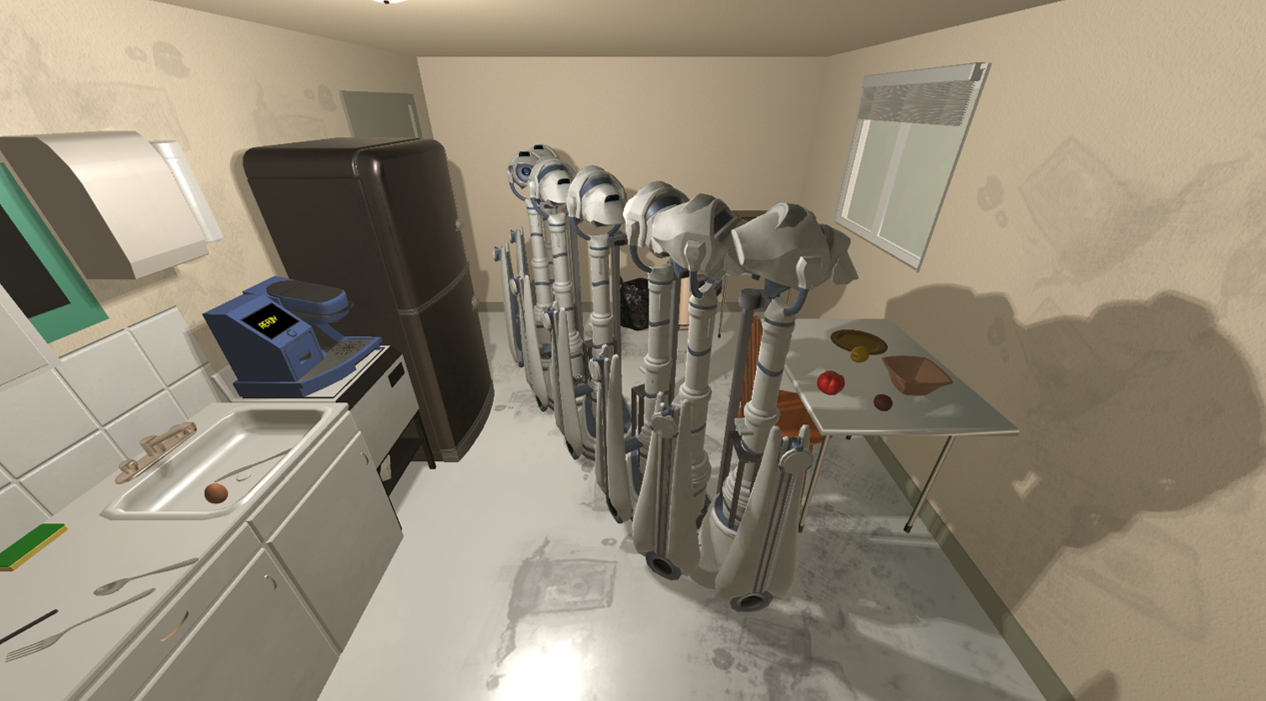}
        \caption{ManipulaTHOR}
        \label{fig:a}
    \end{subfigure}
    
    \begin{subfigure}{\linewidth}
        \centering
        \includegraphics[width=0.8\linewidth]{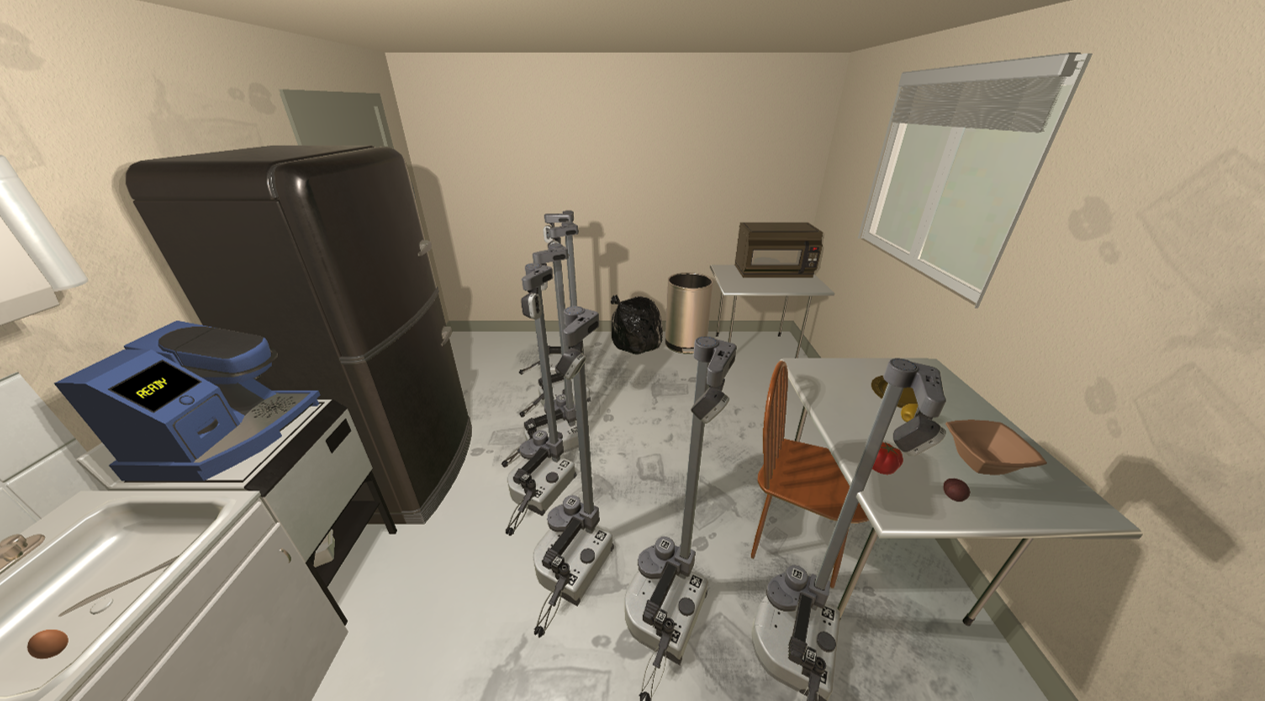}
        \caption{Stretch RE1}
        \label{fig:b}
    \end{subfigure}
    
    \begin{subfigure}{\linewidth}
        \centering
        \includegraphics[width=0.8\linewidth]{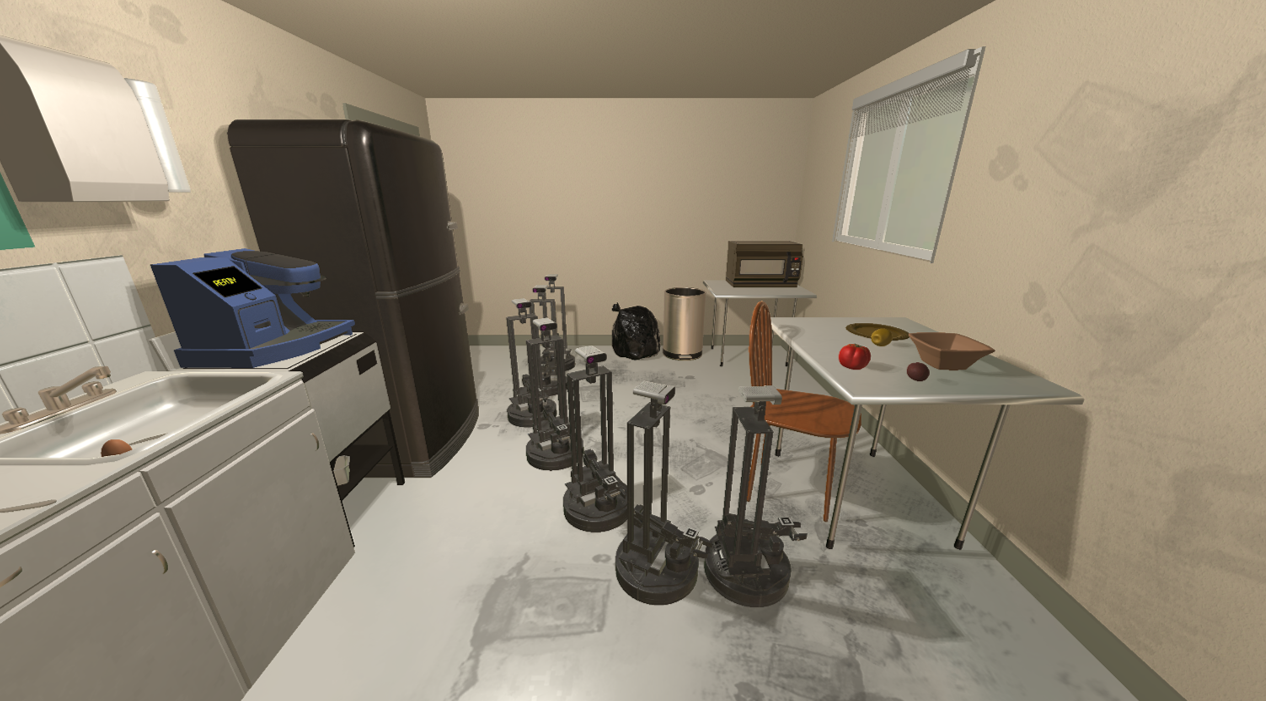}
        \caption{LoCoBot}
        \label{fig:c}
    \end{subfigure}
    
    \caption{Zero-shot policy adaptation across distinct unseen robot embodiments. The policy, trained on a standard agent (a) ManipulaTHOR, is directly deployed onto (b) Stretch RE1, and (c) LoCoBot. Despite significant discrepancies in embodiment configurations compared to the source domain, CAPO successfully adapts to these heterogeneous embodiments, enabling effective navigation without any fine-tuning.}

    \label{fig:cross}
\end{figure}

Quantitative results across all domains are summarized in Table \ref{tab:baseline_eval}. The results demonstrate that CAPO maintains consistently strong adaptation capabilities across all domain factors. In contrast, most baselines exhibit significant performance degradation, particularly in unseen domains, indicating a failure to handle distribution shifts effectively. While ConPE demonstrates a certain level of adaptability, it still falls short of CAPO. This performance gap further highlights the limitation of relying solely on unimodal visual cues. By integrating text contrastive learning with a dynamic adaptive prompt orchestration, CAPO effectively bridges the gap between visual features and goal semantics while selectively aggregating domain-specific features. This multi-modal alignment allows the policy to maintain robustness against visual distractions and adapt flexibly to unseen embodiment configurations, whereas baselines constrained by fixed or unimodal representations fail to generalize.

\begin{figure}[t!]
    \centering
    \includegraphics[width=0.9\linewidth]{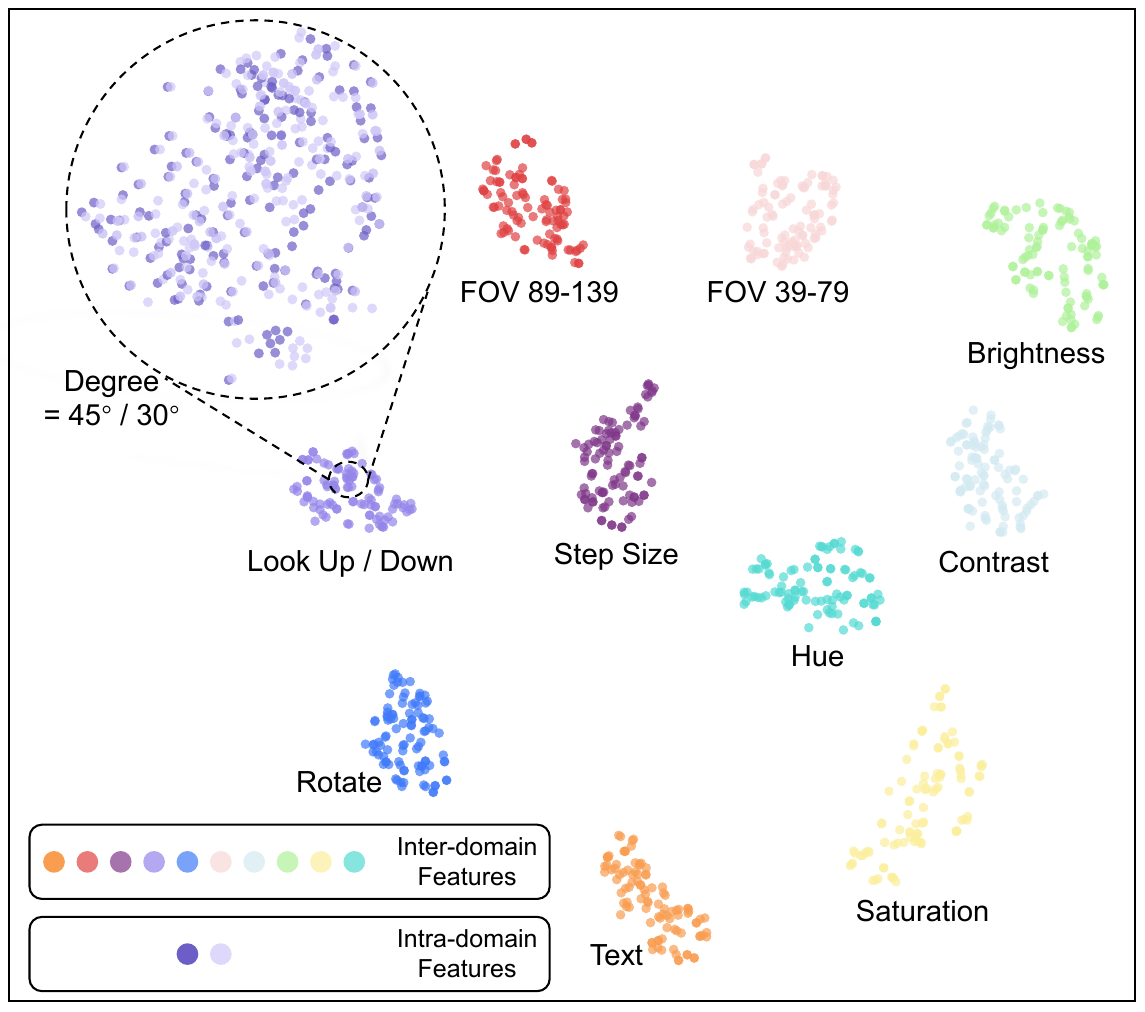}
    \caption{t-SNE visualization of learned prompt representations. The plot reveals that prompts across different domains are well-separated, while features within the same domain form tight clusters. This demonstrates that CAPO effectively disentangles visual and embodiment features.}
    \label{fig:tsne}
\end{figure}

\begin{figure}[t!]
    \centering
    \includegraphics[width=1.0\linewidth]{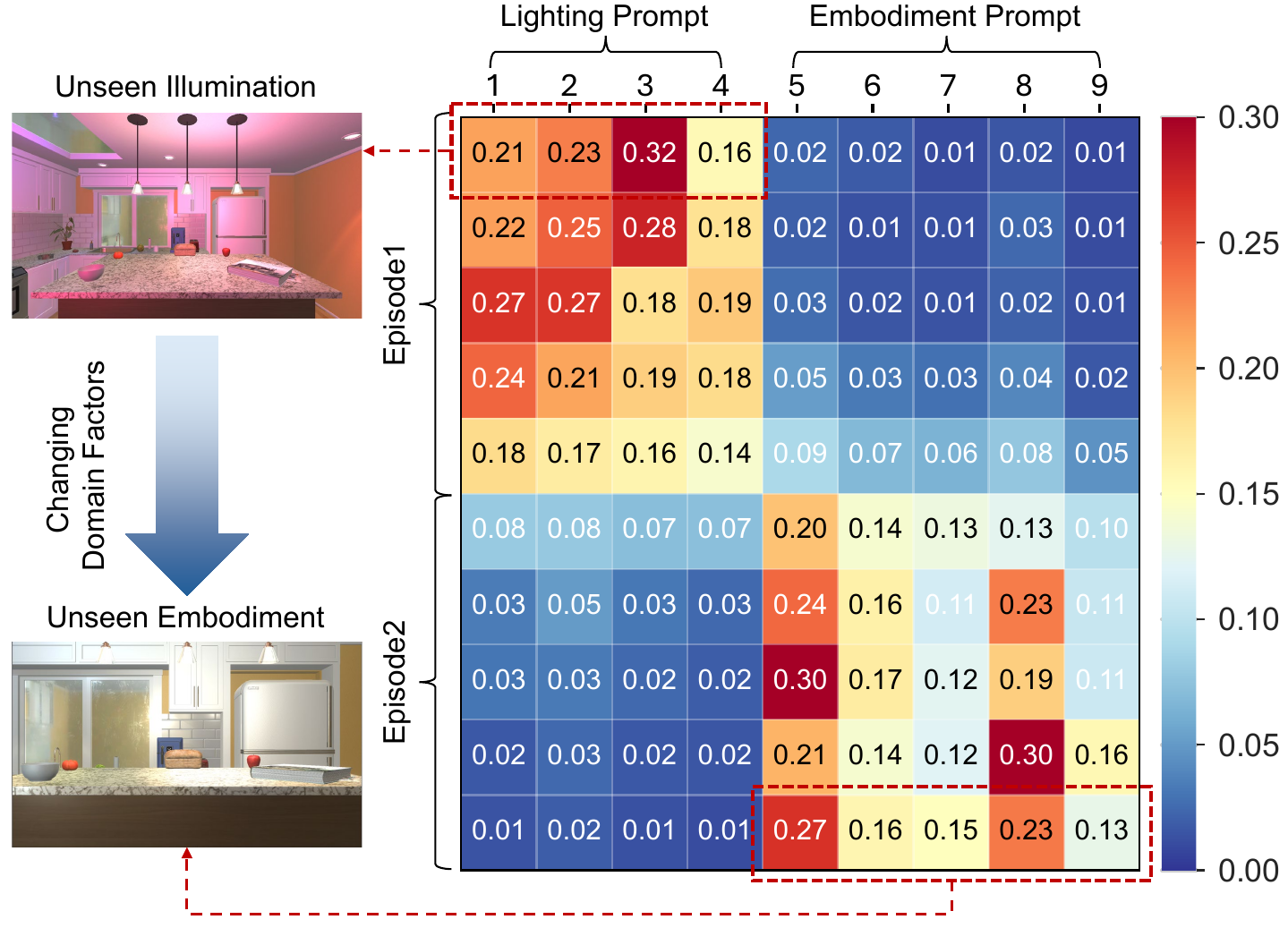}
    \caption{Visualization of attention weights during different navigation episodes. Note that the text prompt is excluded here as it is fused directly. The heatmap shows dynamic shifts in attention distribution over time, indicating that the agent adaptively re-weights prompts to prioritize relevant features as the domain factor changes. This validates the effectiveness of our adaptive prompt orchestration mechanism during deployment.}
    \label{fig:attention}
\end{figure}

\subsection{Cross-Embodiment Policy Adaptation}

Egocentric sequences of the agent under different domain factors and embodiment configuration are visualized in Fig. \ref{fig:tra}. It is shown that CAPO exhibits the most decisive navigation behaviors across evaluation domains. Specifically, our agent maintains stable visual focus on the target ``Bowl'' despite drastic lighting or FOV changes, whereas baselines often perform redundant actions or lose tracking. This suggests the effectiveness of our text contrastive learning and multi-modal prompt orchestration mechanism. By explicitly grounding visual features in invariant language semantics and dynamically filtering domain noise, CAPO ensures the policy remains robustly aligned with the task goal, avoiding the degradation seen in approaches lacking this cross-modal grounding.

Furthermore, to rigorously validate the effectiveness of CAPO beyond sensor parameter shifts (e.g., FOV), we extended our evaluation to physical cross-embodiment adaptation. As illustrated in Fig. \ref{fig:cross}, we directly deployed the policy trained solely on the ManipulaTHOR source agent onto two heterogeneous robot platforms: Stretch RE1 and LoCoBot. These embodiments introduce drastic variations in camera mounting heights and kinematic constraints, which typically disrupt the state distribution for conventional agents. However, CAPO exhibits remarkable adaptability in this challenging setting. By leveraging the learned prompts, the orchestration mechanism effectively compensates for the discrepancies in egocentric views caused by different physical configurations. This allows the agent to execute coherent navigation actions across diverse embodiments, further verifying CAPO's potential for cross-embodiment deployment.

To further analyze the internal representations, we visualize the t-SNE embeddings \cite {van2008visualizing} in Fig. \ref{fig:tsne}. The visualization reveals that the visual representations extracted by prompts corresponding to the same domain factor cluster tightly together, while those associated with distinct domain factors are well-separated. This clear clustering structure confirms the effectiveness of our hybrid contrastive learning strategy. Additionally, Fig. \ref{fig:attention} displays the domain-prompt correlation matrix, which demonstrates that the adaptive prompt orchestration mechanism correctly activates specific prompts corresponding to the dominant domain factors in real-time.

\subsection{Ablation Studies}

To better interpret the contributions of each component in CAPO, we conduct a series of ablation studies and analyses.

\textbf{(1) Hybrid Contrastive Learning Strategy.} We dismantle the training objective to verify the necessity of each contrastive loss component. We evaluate variants of CAPO trained without the text contrastive loss (w/o text), without the action temporal loss (w/o action), and without the visual contrastive loss (w/o visual). As shown in Table \ref{tab:strategy}, w/o text results in the most significant performance drop, highlighting the critical importance of semantic alignment provided by the language priors for adaptation. w/o action leads to the second-largest decline. Given that our evaluation involves diverse cross-embodiment configurations, this performance drop confirms that modeling action-conditioned temporal consistency is essential for adapting to varying agent dynamics. Notably, w/o visual shows the least performance degradation. We attribute this to the fact that the other two objectives also incorporate moderate data augmentation during training, which implicitly imparts a degree of visual invariance even without the explicit visual contrastive loss. However, the full hybrid objective still yields the best performance, demonstrating that these three components are complementary.

\begin{table*}[t!]
\centering
\newcolumntype{Y}{>{\centering\arraybackslash}X}

\setlength{\tabcolsep}{1pt} 
\renewcommand{\arraystretch}{1.2}

\begin{threeparttable}
\caption{Ablation on Hybrid Contrastive Learning Strategy for Visuomotor Policy Learning}
\label{tab:strategy}

\footnotesize 

\begin{tabularx}{\textwidth}{l *{12}{Y}}
\toprule
\multirow{2}{*}{Approach} &
\multicolumn{4}{c}{Source Domains} &
\multicolumn{4}{c}{Seen Target Domains} &
\multicolumn{4}{c}{Unseen Target Domains} \\
\cmidrule(lr){2-5} \cmidrule(lr){6-9} \cmidrule(lr){10-13}
& SR$\uparrow$ & SPL$\uparrow$ & NE$\downarrow$ & EL$\downarrow$
& SR$\uparrow$ & SPL$\uparrow$ & NE$\downarrow$ & EL$\downarrow$
& SR$\uparrow$ & SPL$\uparrow$ & NE$\downarrow$ & EL$\downarrow$ \\
\midrule

CAPO w/o visual
& 92.1$\pm$2.8 & 0.62$\pm$0.04 & 0.04$\pm$0.02 & 20$\pm$4
& 84.5$\pm$4.6 & 0.55$\pm$0.05 & 0.06$\pm$0.03 & 31$\pm$5
& 81.2$\pm$6.8 & 0.50$\pm$0.07 & 0.09$\pm$0.07 & 36$\pm$6 \\

CAPO w/o action
& 89.8$\pm$3.0 & 0.59$\pm$0.04 & 0.04$\pm$0.03 & 23$\pm$5
& 83.1$\pm$3.9  & 0.53$\pm$0.06 & 0.07$\pm$0.03 & 34$\pm$5
& 78.2$\pm$5.7  & 0.47$\pm$0.06 & 0.08$\pm$0.07 & 41$\pm$10 \\

CAPO w/o text
& 87.4$\pm$3.3 & 0.56$\pm$0.06 & 0.07$\pm$0.04 & 26$\pm$5
& 80.5$\pm$3.8 & 0.50$\pm$0.06 & 0.08$\pm$0.06 & 37$\pm$5
& 72.3$\pm$6.8 & 0.45$\pm$0.07 & 0.08$\pm$0.07 & 45$\pm$12 \\

\rowcolor{gray!15}
\textbf{CAPO}
& \textbf{97.9$\pm$1.2} & \textbf{0.66$\pm$0.04} & \textbf{0.02$\pm$0.01} & \textbf{18$\pm$3}
& \textbf{90.9$\pm$3.4} & \textbf{0.61$\pm$0.05} & \textbf{0.04$\pm$0.06} & \textbf{29$\pm$5}
& \textbf{86.4$\pm$5.7} & \textbf{0.54$\pm$0.06} & \textbf{0.06$\pm$0.07} & \textbf{32$\pm$3} \\

\bottomrule
\end{tabularx}
\end{threeparttable}
\end{table*}

\begin{figure}[t!]
    \centering
    \includegraphics[width=1.0\linewidth]{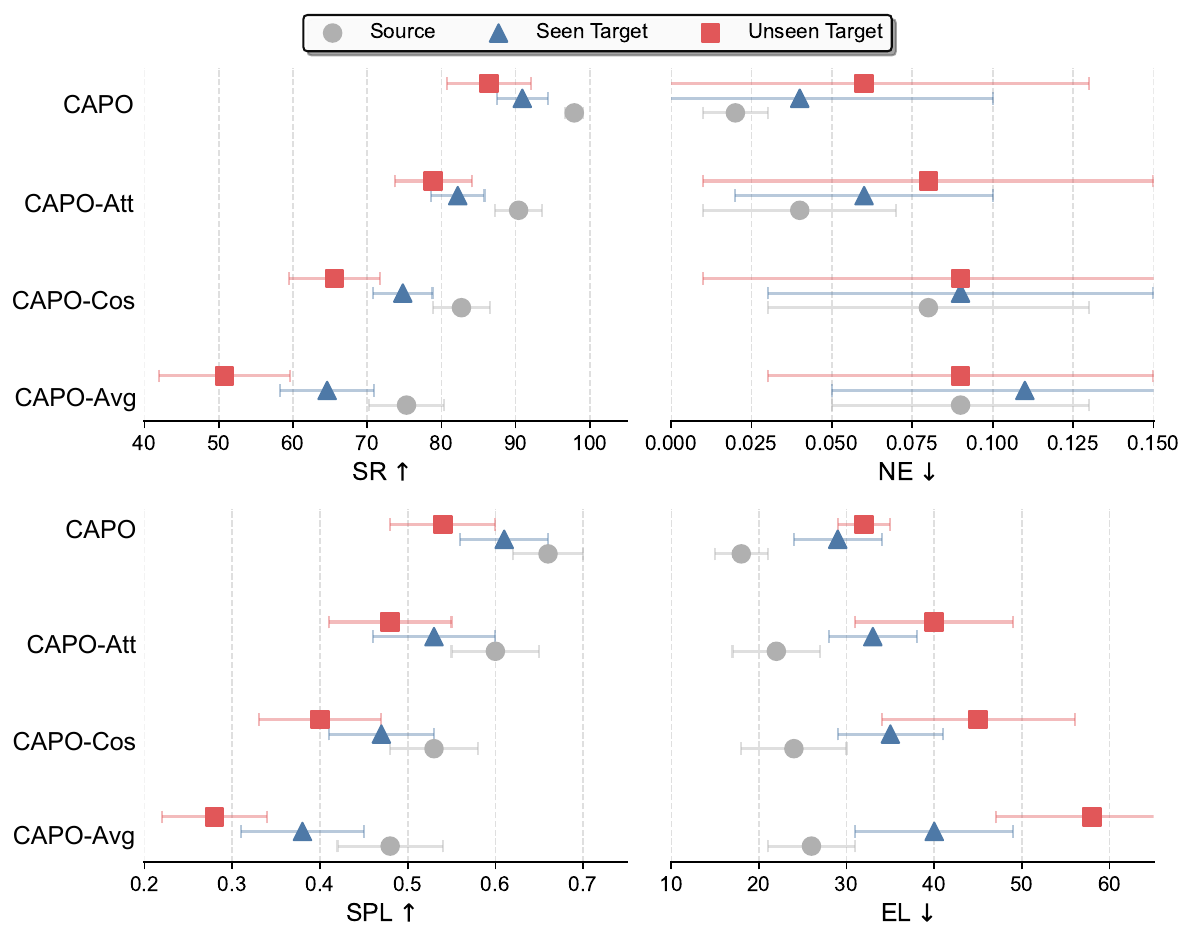}
    \caption{Ablation study on the attention mechanism. We compare the full model against uniform averaging (CAPO-Avg) and single-branch variants (CAPO-Att, CAPO-Cos). The results demonstrate that the full CAPO consistently outperforms all alternatives, confirming the necessity of combining learnable projections with cosine similarity for feature fusion.}
    \label{fig:attention_abla}
\end{figure}

\textbf{(2) Effectiveness of Attention Mechanism.} We analyze the contribution of the adaptive prompt orchestration mechanism by comparing it with three alternative fusion strategies: (i) CAPO-Avg, where the final state representation is simply the uniform mean of all prompted embeddings; (ii) CAPO-Att, which utilizes only the learnable projection network to compute attention weights, disregarding the cosine similarity calculation; and (iii) CAPO-Cos, which relies solely on the cosine similarity between prompted and vanilla embeddings to determine weights without learnable projections. The comparative results are presented in Fig. \ref{fig:attention_abla}. We observe that CAPO-AVG lags significantly behind the full approach. Both CAPO-Att and CAPO-Cos exhibit performance drops compared to the full CAPO. This decline indicates that neither the learnable projection nor the semantic similarity calculation is sufficient on its own. The superior performance of the full CAPO confirms the necessity of our hybrid design, where learnable attention captures task-relevant features while cosine similarity acts as a semantic anchor.

\begin{figure}[t!]
    \centering
    \includegraphics[width=1.0\linewidth]{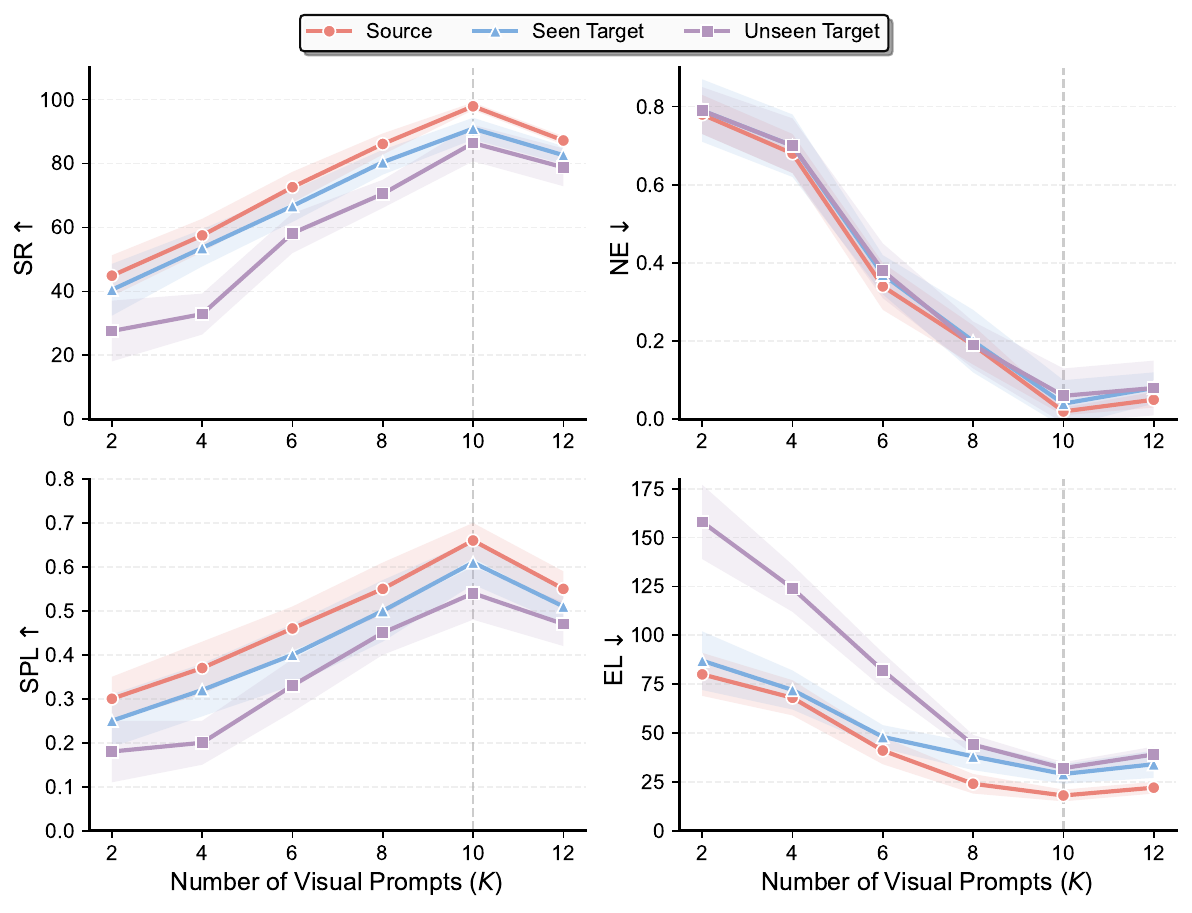}
    \caption{Sensitivity Analysis on the Prompt Pool Size $K$. Performance consistently improves as $K$ increases, peaking at $K=10$ which aligns with the distinct domain factors. Increasing $K$ further to 12 leads to degradation, indicating that excessive prompts introduce redundancy and overfitting.}
    \label{fig:prompt}
\end{figure}

\subsection{Parameter Sensitivity Analysis}
To assess the performance of CAPO under different hyperparameter settings, we further conduct a sensitivity analysis.

\textbf{(1) Sensitivity to Prompt Number.} We investigate the impact of the number of visual prompts $K$ on the policy's success rate. Specifically, we train the CAPO approach with varying prompt pool sizes $K \in \{2, 4, 6, 8, 10, 12\}$, where the setting of $K=12$ is obtained by subdividing the FOV parameter into finer intervals. The evaluation results are summarized in Fig. \ref{fig:prompt}. As shown, the performance improves consistently as $K$ increases. A small number of prompts yields limited improvement, as the capacity is insufficient to capture the diverse domain factors present in the environment. However, the performance saturates around $K=10$, which aligns with the total number of distinct domain factors defined in our data collection phase. This suggests that matching the prompt pool size to the underlying domain variations is optimal, while an excessive number of prompts may lead to redundancy without providing distinct performance gains. Based on this, we adopt $K=10$ as the default setting.

\begin{table*}[t!]
\centering
\newcolumntype{Y}{>{\centering\arraybackslash}X}

\setlength{\tabcolsep}{1pt} 
\renewcommand{\arraystretch}{1.2}

\begin{threeparttable}
\caption{Sensitivity Analysis on Semantic Regularization for Visuomotor Policy Learning}
\label{tab:noise}

\footnotesize 

\begin{tabularx}{\textwidth}{l *{12}{Y}}
\toprule
\multirow{2}{*}{Approach} &
\multicolumn{4}{c}{Source Domains} &
\multicolumn{4}{c}{Seen Target Domains} &
\multicolumn{4}{c}{Unseen Target Domains} \\
\cmidrule(lr){2-5} \cmidrule(lr){6-9} \cmidrule(lr){10-13}
& SR$\uparrow$ & SPL$\uparrow$ & NE$\downarrow$ & EL$\downarrow$
& SR$\uparrow$ & SPL$\uparrow$ & NE$\downarrow$ & EL$\downarrow$
& SR$\uparrow$ & SPL$\uparrow$ & NE$\downarrow$ & EL$\downarrow$ \\
\midrule

$\sigma = 0$
& 92.6$\pm$1.9 & 0.63$\pm$0.04 & 0.03$\pm$0.02 & 20$\pm$3
& 85.4$\pm$3.8 & 0.56$\pm$0.05 & 0.06$\pm$0.03 & 32$\pm$5
& 81.8$\pm$5.9 & 0.51$\pm$0.07 & 0.08$\pm$0.08 & 35$\pm$5 \\

\rowcolor{gray!15}
\textbf{$\sigma = 0.1$}
& \textbf{97.9$\pm$1.2} & \textbf{0.66$\pm$0.04} & \textbf{0.02$\pm$0.01} & \textbf{18$\pm$3}
& \textbf{90.9$\pm$3.4} & \textbf{0.61$\pm$0.05} & \textbf{0.04$\pm$0.06} & \textbf{29$\pm$5}
& \textbf{86.4$\pm$5.7} & \textbf{0.54$\pm$0.06} & \textbf{0.06$\pm$0.07} & \textbf{32$\pm$3} \\

$\sigma = 0.3$
& 87.1$\pm$2.1 & 0.56$\pm$0.07 & 0.04$\pm$0.04 & 27$\pm$5
& 78.7$\pm$4.2 & 0.47$\pm$0.06 & 0.08$\pm$0.08 & 38$\pm$6
& 70.5$\pm$6.6 & 0.42$\pm$0.08 & 0.10$\pm$0.08 & 45$\pm$8 \\

$\sigma = 0.5$
& 83.5$\pm$3.0 & 0.53$\pm$0.06 & 0.08$\pm$0.06 & 25$\pm$6
& 73.2$\pm$5.2 & 0.46$\pm$0.08 & 0.09$\pm$0.08 & 39$\pm$6
& 65.6$\pm$6.9 & 0.38$\pm$0.08 & 0.11$\pm$0.08 & 47$\pm$11 \\

CAPO-Reg
& 80.2$\pm$3.9 & 0.49$\pm$0.07 & 0.13$\pm$0.06 & 37$\pm$6
& 69.8$\pm$6.3 & 0.41$\pm$0.08 & 0.11$\pm$0.09 & 47$\pm$8
& 57.4$\pm$7.7 & 0.32$\pm$0.09 & 0.15$\pm$0.09 & 72$\pm$13 \\

\bottomrule
\end{tabularx}
\end{threeparttable}
\end{table*}

\begin{figure}[t!]
    \centering
    \includegraphics[width=1.0\linewidth]{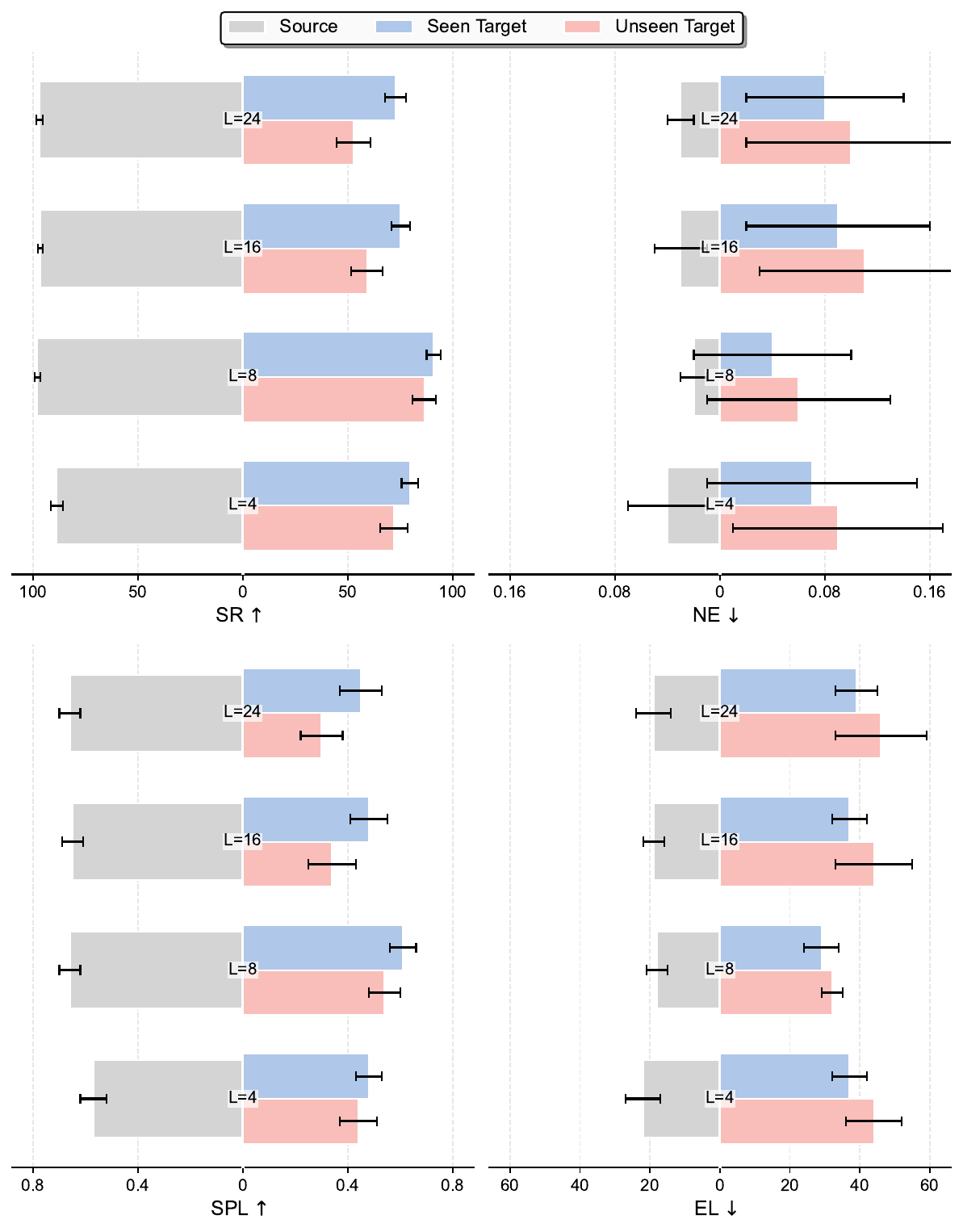}
    \caption{Sensitivity analysis on the visual prompt length $L$. We evaluate performance across varying lengths $L \in \{4, 8, 16, 24\}$. The results indicate that $L=8$ achieves the optimal trade-off between feature expressivity and adaptability. Shorter prompts ($L=4$) lead to underfitting, while longer prompts ($L=24$) result in overfitting to source domains, thereby degrading adaptability in unseen target domains.}
    \label{fig:attention_length}
\end{figure}

\textbf{(2) Sensitivity to Prompt Length.} We examine the sensitivity of CAPO to the length of the learnable visual prompts $L$, testing $L \in \{4, 8, 16, 24\}$. The results in Fig. \ref{fig:attention_length} indicate that $L=8$ provides the optimal balance. Shorter prompts ($L=4$) lack the expressivity to encode complex domain-specific features, resulting in underfitting. Conversely, longer prompts ($L=24$) tend to overfit the specific visual textures of the source domains, hampering zero-shot adaptation to unseen target domains. This suggests that a moderate prompt length is sufficient and effective for steering the frozen CLIP encoder.

\textbf{(3) Sensitivity to Semantic Regularization.} We examine the sensitivity of CAPO to the noise scale $\sigma$ used in the semantic regularization mechanism $\zeta \sim \mathcal{N}(0, \sigma^2 I)$. As shown in Table \ref{tab:noise}, introducing a moderate amount of noise ($\sigma=0.1$) yields the highest performance, outperforming other settings. This suggests that appropriate regularization helps prevent overfitting to source domains. However, performance degrades when $\sigma$ is increased from 0.3 to 0.5, indicating that excessive perturbation disrupts feature integrity. Additionally, removing text prompts while relying solely on regularization (CAPO-Reg) results in significant performance drops, confirming that noise injection is complementary to, but cannot replace, the explicit semantic guidance from text prompts.

\section{Conclusion}
In this work, we propose CAPO, a novel approach that integrates contrastive prompt learning with adaptive prompt orchestration to enhance the sample efficiency and cross-embodiment adaptation of navigation policies. For contrastive prompt learning, a hybrid contrastive strategy incorporating visual, action, and text objectives is employed to learn a diverse pool of domain-specific visual prompts. These prompts effectively capture fine-grained environmental and embodiment variations. After this, the visual encoder is frozen, and a dynamic prompt orchestration mechanism is introduced to adaptively aggregate these prompts based on the current observation context. CAPO allows the agent to dynamically re-weight visual features under domain variations, enabling rapid adaptation to unseen configurations. Extensive simulation experiments demonstrate that CAPO significantly improves sample efficiency and asymptotic performance, while enabling robust zero-shot adaptation across diverse unseen domains. However, the approach's adaptability is bounded by the diversity of source domains, potentially limiting performance under extreme distribution shifts orthogonal to the learned prompt pool. Future work will address this via online prompt adaptation. Moreover, while the current approach utilizes semantic priors primarily for the ObjectNav task, we plan to extend this semantic alignment capability to more complex vision-language navigation tasks.

\bibliographystyle{IEEEtran}
\bibliography{IEEEabrv,yuhang}

\vspace{-1cm}
\begin{IEEEbiography}[{\includegraphics[width=1in,height=1.25in,clip,keepaspectratio]{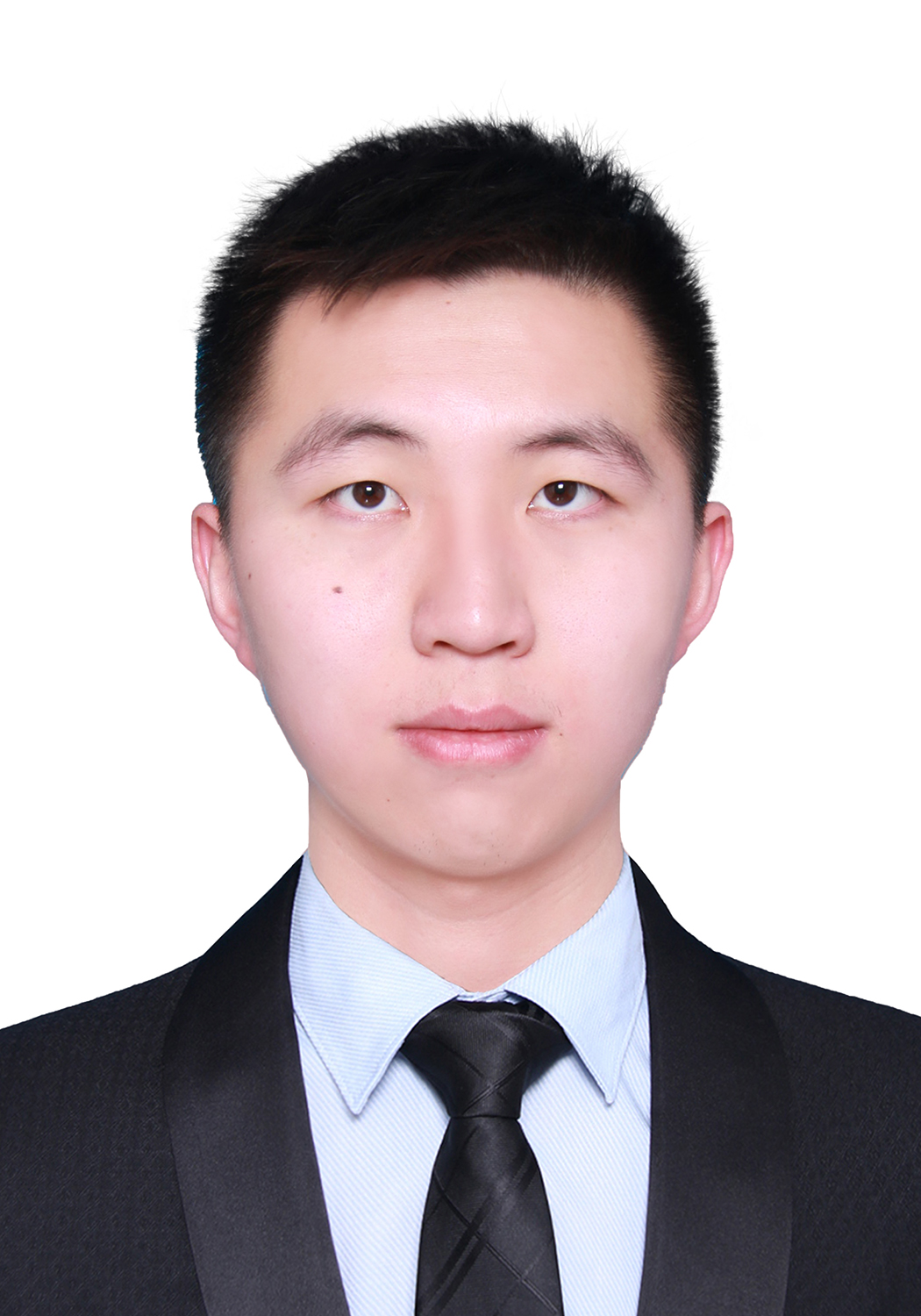}}]{Yuhang Zhang} (graduate student member, IEEE) received the B.E. degree in flight vehicle propulsion engineering from Harbin Engineering University, Harbin, China, in 2021 and the M.Eng. degree from the School of Mechanical \& Aerospace Engineering at Nanyang Technological University (NTU), Singapore, in 2023. Currently,  he is pursuing his Ph.D. degree at the School of Mechanical \& Aerospace Engineering at NTU, Singapore.

His research primarily focuses on unmanned aerial vehicles, deep reinforcement learning, and vision-and-language navigation.
\end{IEEEbiography}

\begin{IEEEbiography}[{\includegraphics[width=1in,height=1.25in,clip,keepaspectratio]{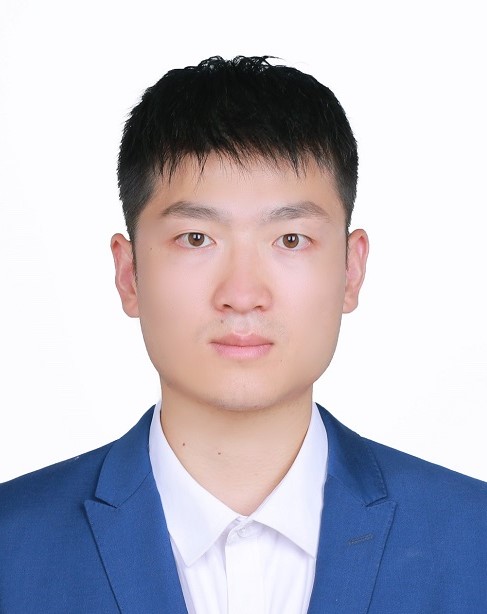}}]{Chao Yan} (member, IEEE) received the B.E. degree in electrical engineering and automation from China University of Mining and Technology, Xuzhou, China, in 2017, and the M.S. and Ph.D. degrees in control science and engineering from the National University of Defense Technology, Changsha, China, in 2019, and 2023, respectively. He was a visiting Ph.D. student with the School of Mechanical and Aerospace Engineering, Nanyang Technological University, Singapore, from 2021 to 2022.

He is currently an Associate Professor with the College of Automation Engineering, Nanjing University of Aeronautics and Astronautics, Nanjing, China. His research interests include deep reinforcement learning and coordination control of UAV swarms.
\end{IEEEbiography}

\begin{IEEEbiography}[{\includegraphics[width=1in,height=1.25in,clip,keepaspectratio]{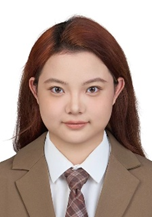}}]{Jiaxi Yu} received the B.E. degree in Mechanical Design, Manufacturing and Automation from Shandong University, Jinan, China, in 2025. She is presently enrolled in the M.SC. program at the School of Mechanical and Aerospace Engineering, Nanyang Technological University (NTU), Singapore.

Her research interests are mainly oriented toward reinforcement learning.
\end{IEEEbiography}

\begin{IEEEbiography}[{\includegraphics[width=1in,height=1.25in,clip,keepaspectratio]{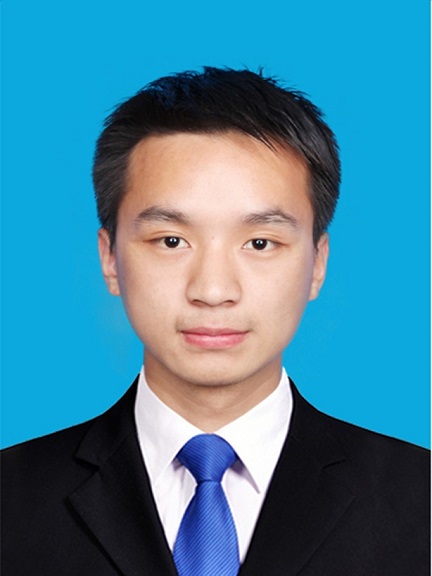}}]{Jiaping Xiao} (member, IEEE)
received the B.E. degree in aircraft design and engineering and the M.S. degree in flight dynamics and control from Beihang University, Beijing, China, in 2014 and 2017, and the Ph.D. degree in intelligent systems from Nanyang Technological University (NTU), Singapore, in 2024. He is currently a research fellow with the School of Mechanical and Aerospace Engineering, NTU.

His research interests include cyber-physical systems, reinforcement learning, machine vision, and aerial robotics.
\end{IEEEbiography}

\begin{IEEEbiography}[{\includegraphics[width=1in,height=1.25in,clip,keepaspectratio]{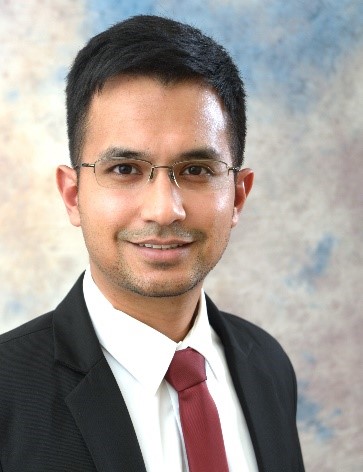}}]{Mir Feroskhan}
(member, IEEE) received B.E. degree (Hons.) in aerospace engineering from Nanyang Technological University, Singapore, in 2011, and the Ph.D. degree in aerospace engineering from the Florida Institute of Technology, Melbourne, FL, in 2016. He is currently an assistant professor with the School of Mechanical \& Aerospace Engineering at NTU.

His research interests include nonlinear control systems, multi-agent systems, flight dynamics and control, and aerial robotics.
\end{IEEEbiography}

\end{document}